\documentclass{article}

%
%


\usepackage{geometry}
\geometry{margin=1in}

\usepackage[round]{natbib}

\bibliographystyle{apalike}

\usepackage{indentfirst}
\usepackage{amsmath, amssymb, amsthm, amsfonts}
\usepackage{xcolor}
\usepackage{algorithm}
\usepackage{algorithmic}
\usepackage{paralist}
\usepackage{xurl}
\usepackage{graphicx}
\usepackage{caption}
\usepackage{subcaption}
\usepackage{diagbox}

\theoremstyle{plain}

\newtheorem{lemma}{Lemma}

\theoremstyle{definition}
\newtheorem{definition}{Definition}
\newtheorem{problem}{Problem}
\newtheorem{example}{Example}

\definecolor{colorx}{rgb}{0.4, 0.1, 0.7}

\newcommand{\eps}{{\varepsilon}}
\newcommand{\Greedy}{\textsc{Greedy}}
\newcommand{\Ada}{\textsc{Ada}}
\newcommand{\LR}{\textsc{LR}}
\newcommand{\PF}{\textsc{PF}}
\newcommand{\hPF}{\textsc{hPF}}
\newcommand{\ERM}{\textsc{ERM}}
\newcommand{\BeFair}{\textsc{BeFair}}
\newcommand{\G}{\textsc{G}}

\newcommand{\mae}{\mathrm{MAE}}

\usepackage{thmtools}
\usepackage{thm-restate}
\usepackage{hyperref}
\usepackage[capitalise]{cleveref}

\usepackage[symbol]{footmisc}
\usepackage{wrapfig}

\usepackage{multibib}
\newcites{SM}{Supplementary References}

\usepackage{bbm}
\newcommand{\err}{{\mathrm{err}}}
\newcommand{\indicator}{\mathbbm{1}}
\newcommand{\E}{\mathbbm{E}}
\newcommand{\gro}{\mathcal{G}}
\newcommand{\hyp}{\mathcal{H}}
\newcommand{\hpr}{{h^\prime}}

\usepackage[inline]{enumitem}

\title{Fair for All: Best-effort Fairness Guarantees for Classification}

\usepackage{authblk}

\author[1]{Anilesh K. Krishnaswamy}
\author[2]{Zhihao Jiang}
\author[1]{Kangning Wang}
\author[3]{Yu Cheng}
\author[1]{Kamesh Munagala}

\affil[1]{Duke University}
\affil[2]{Tsinghua University}
\affil[3]{University of Illinois at Chicago}

\date{}

\begin{document}

%

%

\maketitle

\begin{abstract}

Standard approaches to group-based notions of fairness, such as \emph{parity} and \emph{equalized odds}, try to equalize absolute measures of performance across known groups (based on race, gender, etc.). Consequently, a group that is inherently harder to classify may hold back the performance on other groups; and no guarantees can be provided for unforeseen groups. Instead, we propose a fairness notion whose guarantee, on each group $g$ in a class $\mathcal{G}$, is relative to the performance of the best classifier on $g$. We apply this notion to broad classes of groups, in particular, where (a) $\mathcal{G}$ consists of all possible groups (subsets) in the data, and (b) $\mathcal{G}$ is more streamlined.

For the first setting, which is akin to groups being completely unknown, we devise the {\sc PF} (Proportional Fairness) classifier, which guarantees, on any possible group $g$, an accuracy that is proportional to that of the optimal classifier for $g$, scaled by the relative size of $g$ in the data set. Due to including all possible groups, some of which could be too complex to be relevant, the worst-case theoretical guarantees here have to be proportionally weaker for smaller subsets.

For the second setting, we devise the  {\sc BeFair} (Best-effort Fair) framework which seeks an accuracy, on every $g \in \mathcal{G}$, which approximates that of the optimal classifier on $g$, independent of the size of $g$. Aiming for such a guarantee results in a non-convex problem, and we design novel techniques to get around this difficulty when $\mathcal{G}$ is the set of linear hypotheses. We test our algorithms on real-world data sets, and present interesting comparative insights on their performance.

\end{abstract}

\section{Introduction}\label{sec:intro}



Machine learning is playing an ever-increasing role in making decisions that have a significant impact on our lives. Of late, we have seen the deployment of machine learning methods to provide advice for decisions pertaining to criminal justice \citep{angwin2016machine, berk2018fairness}, credit/lending  \citep{koren2016does}, health/medicine \citep{rajkomar2018ensuring}, etc. Given the concerns of disparate impact and bias in this regard \citep{angwin2016machine, barocas2016big}, it is imperative that machine learning models are fair.

The question of defining notions of fairness, and developing methods to achieve them, has received a great deal of attention \citep{barocas2017fairness, binns2017fairness}. A common theme among the many approaches proposed thus far \citep{kleinberg2018inherent, chouldechova2017fair} is to fix beforehand a list of protected groups, and then ask for the (approximate) equality of some statistical measure across them. For example, \emph{parity} seeks to equalize the accuracy across the given groups \citep{calders2009building}, while \emph{equalized odds} seeks to equalize false positive or false negative rates \citep{hardt2016equality}.

Classical definitions of fairness from microeconomics have also found application in machine learning \citep{balcan2019envy, chen2019proportionally, HossainMS19}. In particular, there has been recent work \citep{zafar2017parity, ustun2019fairness} on adapting the notion of \emph{envy-freeness}, which is born out of fair division theory \citep{brams1996fair}, to a group-based variant tailored to (binary) classification -- every given pre-defined group should prefer the way it is classified (on aggregate) in comparison to how it would have been if it assumed the identity of some other group. 

A major drawback of the aforementioned approaches is that they aim for an absolute guarantee: when some of the groups are inherently harder to classify than others, trying to achieve a particular measure of fairness, say equalized odds \citep{hardt2016equality}, could do more harm than good by bringing down the accuracy on a group that is easier to classify (see Figure \ref{fig:harm} for an example). In this paper, we take a more relative \emph{best-effort} approach: aiming for guarantees that are defined in terms of how well each group can be classified in itself.

\begin{figure}
    \centering
    \includegraphics[scale=0.25]{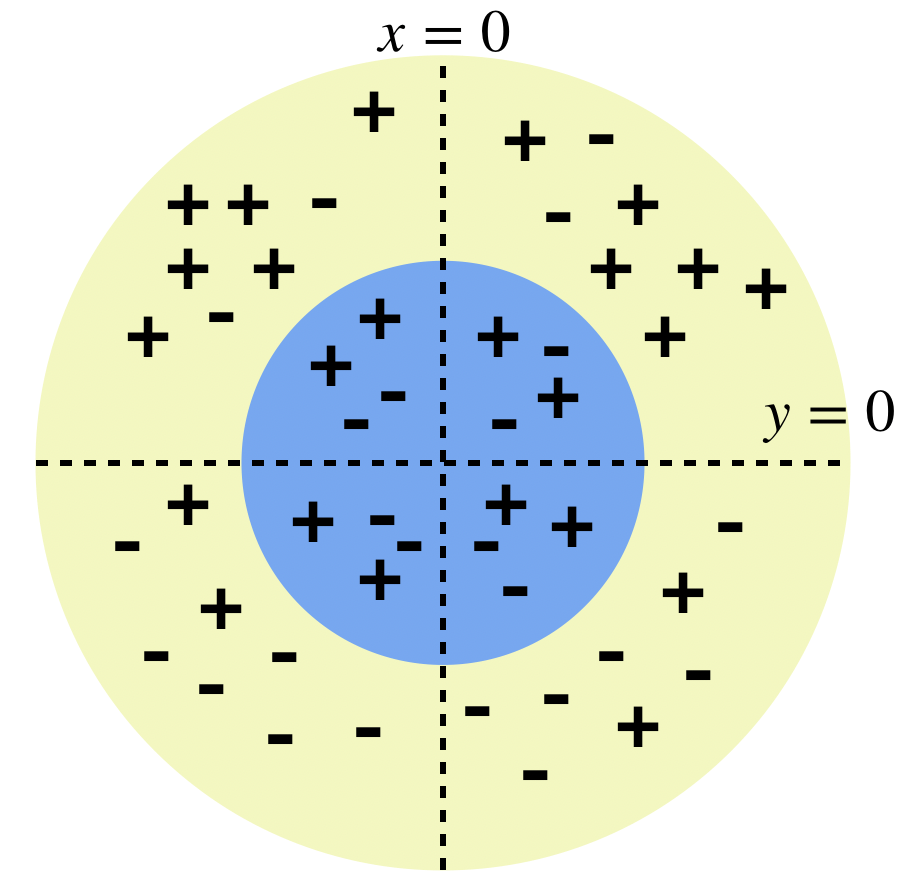}
    \caption{Given two groups Blue and Yellow (with true labels as shown), we have to choose just between the two classifiers $x=0$ and $y=0$. The Blue group is inherently harder to classify. Equalized odds makes us choose the classifier $x=0$, thereby hurting the Yellow group. We could choose $y=0$ with no aggregate effect on Blue, doing much better on Yellow. 
    } 
    \label{fig:harm}
\end{figure}

Another drawback of the standard approaches to fairness is that they depend critically on the specification of groups (via sensitive features such as race, gender, etc.). In many cases, the sensitive features are either missing \citep{chen2019fairness}, or unusable, considering the need to adhere to \emph{treatment parity} and anti-discrimination laws \citep{barocas2016big}. Even if they can be used, it is sometimes not clear what the right categorization within them should be. For instance, it could be that a particular demographic group, which is defined on the basis of a shared cultural or ethnic feature, is actually a collection of hidden subgroups that are otherwise quite heterogeneous in terms of other socio-economic indicators \citep{meier2012latino, chang2011debunking}. 
Therefore, mis-specifying or mis-calibrating the protected groups could end up hurting some groups within the data, potentially leading to unintended consequences such as a feeling of resentment among them \citep{hoggett2013fairness}.

We  use  the  following  instructive  albeit  stylized  example to illustrate the effect of missing group information.
\begin{example}
As shown below in Table \ref{tab:example1}, there are two binary features $a,b \in \{0,1\}$, and a hidden demographic feature $c \in \{0,1\}$. The target label $y$ follows the formula $y = (a \wedge c) \vee (b \wedge \neg c)$: if the hidden feature $c = 1$, then $a$ is a perfect classifier, and if $c=0$, then $b$ is a perfect classifier. For brevity, we define three groups: $P = \{(1,0,0), (0,1,0)\}$, $Q = \{(1,0,1), (0,1,1)\}$, and $R = (P \cup Q)^\mathsf{c}$.

As a concrete example, suppose each data point corresponds to a job candidate.
The hidden feature $c$ corresponds to \emph{gender}, $(a,b)$ correspond to measures of two different traits, and $y$ the assignment to one of two jobs. Suppose the family of available classifiers is $\mathcal{H} = \{a,b\}$. It can be seen that any of these classifiers does poorly in terms of fairness for groups based on the hidden feature $c$. 

Consider the classifier $a$ (in this case, a solution to the standard {Empirical Risk Minimization} ({\ERM}) with $0$-$1$ loss), which correctly classifies all data except those in $P$. Therefore, the {\ERM} classifier $a$ is unfair to those of gender $0$. Our {\PF} classifier (which we shall see later) gets around this issue by randomizing between $a$ and $b$. This allows us to classify $P$ and $Q$ correctly with probability $0.5$, and $R$ correctly with probability $1$. Note that the {\PF} classifier is able to treat every gender equally in expectation, even without access to the gender labels.
\end{example}

\begin{table}
\small
\centering
\begin{tabular}{c|cccccccc}
 $a$ & $1$&  $1$&  $0$&  $0$&  $1$&  $1$&  $0$&  $0$\\ 
 $b$ & $1$&  $0$&  $1$&  $0$&  $1$&  $0$&  $1$&  $0$\\
 $c$ & $1$&  $1$&  $1$&  $1$&  $0$&  $0$&  $0$&  $0$\\\hline
 $y$ & $1$&  $1$&  $0$&  $0$&  $1$&  $0$&  $1$&  $0$
 \smallskip
\end{tabular}
\caption{$a,b$ are visible features, $c$ is a hidden ``demographic'' feature, and $y$ is the target label.}
\label{tab:example1}
\end{table}

In order to deal with the above issues, we take a best-effort approach to fairness, one that can be applied to broad classes of groups. If the group identities were well defined and limited in number, simple solutions work: for example, one could perhaps train decoupled classifiers \citep{ustun2019fairness, dwork2018decoupled}. With the unavailability or mis-specification of group information, however, the problem is much more interesting. In this regard, we look at two settings -- where the groups taken into account are given by \begin{enumerate*}[label=(\alph*)]
    \item all possible subsets in the data, and
    \item a more streamlined class of groups, such as all linearly separable ones. 
\end{enumerate*}

In the former setting, we are effectively reasoning about fairness even though there are no pre-specified groups. Standard statistical notions are of no use in this regard, for, as noted by \citet{KearnsNRW18}, ``we cannot insist on any notion of statistical fairness for every subgroup of the population: for example, any imperfect classifier could be accused of being unfair to the subgroup of individuals defined ex-post as the set of individuals it mis-classified.''

In fact, such a limitation also applies to any deterministic classifier. Therefore, focusing on randomized classifiers, the questions we consider first (Section \ref{sec:pf}) are: \emph{What is the best possible best-effort guarantee that can be achieved for all groups simultaneously?} We will see that, on account of taking all groups into account, some of which could be too complex to be meaningful in practice, we have to settle for guarantees that are proportionally weaker for smaller subsets. \emph{Are there algorithms that achieve such a guarantee?} We answer this question in the affirmative by devising the Proportional Fairness (\PF) classifier.

The next natural question is: \emph{can we do better if we consider a more streamlined class of groups?} We will see (Section \ref{sec:pf_error}) that this is indeed the case. We note here that standard fairness notions (such as \emph{parity}) can also be applied in such settings, by effectively solving a convex optimization problem \citep{KearnsNRW18}. In our best-effort fairness (\BeFair) approach, even when we consider linearly separable groups, we need to solve a non-convex problem. A major contribution of our work is devising a way of dealing with this difficulty, and at that, one that works well in practice. In Section \ref{sec:experiments}, we \emph{evaluate all our algorithms on real-world datasets}. We see that our {\BeFair } approach is able to achieve strong best-effort guarantees, significantly better than standard {\ERM } classifiers. We also present several empirical insights on the performance on {\PF}, mostly in line with our theoretical results.

A more detailed overview of our results are provided in Section \ref{sec:model}. All our proofs are provided in the Appendix.

\subsection{Related literature}
\label{sec:related}
The extant literature on fairness in machine learning \cite{hardt2016equality, kamiran2012data, hajian2012methodology, chouldechova2017fair, corbett2017algorithmic} primarily considers statistical notions of fairness which require the protected groups to be specified as input to the (binary) classification problem. Many of these notions are further known to be incompatible with one another \cite{kleinberg2018inherent, friedler2016possibility}. Individual notions of fairness, which loosely translate to asking for ``similar individuals'' to be ``treated similarly'', have also been studied \cite{dwork2012fairness}. However, this requires additional assumptions to be made about the problem at hand, in the form of, e.g., a ``similarity metric'' defined on pairs of data points. Another related notion is envy-freeness \citep{HossainMS19}, which isn't very useful without group information (more in the Appendix).

There are several papers on fairness that utilize the broadly applicable framework of minimax optimization in their algorithms \citep{agarwal2018reductions, rezaei2020fairness, baharlouei2019renyi, madras2018learning}.
For example, \cite{rezaei2020fairness} derive a novel distributionally robust classification method by incorporating fairness criteria into a worst-case loss minimization program. When compared with this literature, our work is different in one or both of two senses: First, we have a novel and conceptually different notion of fairness that approximates the best-effort guarantee for each group, as opposed to objectives such as max-min fairness or parity. Second, the cardinality of the set of groups can be unbounded (defined by linear constraints on either the feature space or its basis expansion) in our case, as opposed to operating with a fixed set of groups. As we shall see later, each of these aspects presents its own technical challenges.

Several issues have been raised with respect to defining the demographic groups that need to be considered for fairness. \cite{chen2019fairness} assess the prevalence of disparity when missing demographic identities are imputed from the data. \cite{hashimoto2018fairness} look at a model where user retention among different groups is linked to the accuracy achieved on them respectively, and design algorithms that improve the user retention among minority groups based on distributionally robust optimization. Their methods, while oblivious to the identity of the groups, operate under the assumption that there are a fixed number $K$ of groups, and work well in practice for small $K$. \citep{kim2019multiaccuracy} develop multi-accuracy auditing to guarantee the fairness for identifiable subgroups, by post-processing the classifier such that it is unbiased. \cite{KearnsNRW18} study the problem of auditing classifiers for statistical parity (or other related fairness concepts) across a (possibly infinite) collection of groups of bounded VC dimension. However, they do not consider the fact that some groups could be inherently harder to classify than others, and instead work with standard statistical notions such as statistical parity. Doing so results in a non-convex problem -- something that we deal with in our work. 


Our {\BeFair} approach assumes black-box access to an agnostic learning oracle. Such reductions are commonplace in recent work on fairness in machine learning \citep{KearnsNRW18}. For example, \cite{agarwal2018reductions} reduce fair classification to a sequence of cost-sensitive classifications, the solutions of which can be achieved using out-of-the-box classification methods. 

The study of fairness has had a much longer history in economics, in particular, the literature on fair division and cake-cutting \citep{brams1996fair, robertson1998cake}. Out of this line of work have emerged general notions of fairness such as proportionality \citep{steinhaus1948problem}, envy-freeness \citep{varian1973equity}, the core \citep{foley1970lindahl}, and egalitarian (or maxmin) fairness \citep{rawls2009theory, hahne1991round}, to name a few. Of these, the idea of envy-freeness has received great attention in computer science (e.g., \citep{chen2013truth, cohler2011optimal}), and been amenable to adaptation into machine learning as a group-based notion of fairness \citep{balcan2019envy, ustun2019fairness, zafar2017parity}. \citet{HossainMS19} devise algorithms, for multi-class classification that can achieve a variant of group-based (approximate) envy-freeness, with sample complexity of the order of $\log |\mathcal{G}|$, where $\mathcal{G}$ is the collection of pre-defined groups. Our guarantees on the accuracy of the {\PF} classifier are related to the notion of the core: for example, analogous guarantees have been studied in the setting of participatory budgeting \citep{fain2016core}. However, dealing with envy-freeness at an individual level, in the absence of group information, is not very useful. Even in simple binary settings, such as loan and bail applications (all individuals prefer being classified positively, i.e. receiving a loan/bail), satisfying envy-freeness requires all individuals to receive the same outcome \citep{HossainMS19}.

Practical implementations of the above-mentioned economic notions of fairness have garnered interest in the literature on network resource allocation \citep{kleinberg1999fairness, kumar2006fairness}. Proportional fairness \citep{kelly1998rate, bonald2006queueing} has been seen as a way of attaining a middle-ground between welfare maximization and maxmin fairness \citep{jain1984quantitative}. The applicability of proportional fairness to machine learning, in terms of the performance of classification across groups, has, to the best of our knowledge, not been studied before. Although, \cite{li2019fair} do study fair resource allocation based on $\alpha$-fairness (of which proportional fairness is a special case with $\alpha=1$) to improve fairness in terms of the performance across devices in a decentralized federated learning setting.

\subsection{Our model and results}\label{sec:model}
We are given a set of $n$ data points denoted by $\mathcal{N}$, with their features given by $\{x_i\}_{i \in \mathcal{N}}$, and their true binary labels by $\{y_i\}_{i \in \mathcal{N}}$. The hypothesis space at hand will be denoted by $\mathcal{H}$, a set of (deterministic) classifiers. The Boolean variable $u_i(h) \in \{0,1\}$ denotes whether the classifier $h \in \mathcal{H}$ correctly classifies data point $i$. In other words, $u_i(h) = \indicator[h(x_i) = y_i]$. A classification instance is defined by a pair $(\mathcal{N}, \mathcal{H})$. 
We assume that for any classifier $h \in \mathcal{H}$, its \emph{complement} $\bar{h}$, defined by flipping the classification outcomes of $h$ (i.e., $h(x_i) = 1 - \bar{h}(x_i)$), is also in $\mathcal{H}$. This assumption is valid for most natural families of binary classifiers. We denote by $\Delta(\mathcal{H})$ the space of all randomized classifiers over $\mathcal{H}$. If $h \in \Delta(\mathcal{H})$ is obtained via a distribution $D_h$ over $\mathcal{H}$, then for a data point $i \in \mathcal{N}$, we defined the utility $u_i(h) \triangleq \E_{\hpr \sim D_h}[u_i(\hpr)]$.

We are given $\mathcal{G}$, a class of groups, each element of which is of the form $g : \mathcal{N} \to \{1, -1\}$. $g(i) = 1$ means $i$ is in the group and $g(i) = -1$ indicates the opposite. We also use $g$ to denote the subset given by $\{i \in \mathcal{N}: g(i) = 1\}$ and $|g|$ as its size $|\{i \in \mathcal{N}: g(i) = 1\}|$. For any such $g$, its utility under $h$ is $u_g(h) = \frac{1}{|g|} \sum_{i \in g} u_i(h)$. 

For each $g \in \mathcal{G}$, define $h^*_g \triangleq \arg \max_{h \in \mathcal{H}} u_g(h)$ to be the best classifier for the group $g$. The best-effort fairness guarantee is captured via a constraint of the form $f(u_g(h), u_g(\hpr), |g|) \ge 0$. The function $f(\cdot)$ constrains the accuracy $u_g(h)$ of $h$, the classifier at hand, to that of the optimal classifier $h^*_g$ for $g$, with a possible dependence on the size $|g|$ of the group $g$. Applying such a constraint for all $g \in \mathcal{G}$ gives us a uniform best-effort fairness guarantee. What sort of $f(\cdot)$ is workable depends on the class $\mathcal{G}$ considered. 

In Section \ref{sec:pf}, we consider the case where $\mathcal{G}$ includes all the subsets of $\mathcal{N}$, i.e., there is no specific information about $\mathcal{G}$. Via a theoretical worst-case bound (Theorem \ref{thm:lowerbound}), we show that the best we can do in this case is to choose $f(\cdot) = {u}_g(h) - \frac{|g|}{|\mathcal{N}|}[{u}_g(\hpr)]^2$. For any group $g$ that can be {\em perfectly classified} by some $\hpr \in \mathcal{H}$ ($u_g(\hpr) = 1$), the same constraint boils down to $u_g(h) \ge  |g|/|\mathcal{N}|$: in other words, a utility of at least $|g|/|\mathcal{N}|$ should be guaranteed on such a set. Such a guarantee can be interpreted as fairness: If $g$ is a potentially hidden demographic that can be perfectly classified using some features, our classifier should not ignore those features entirely. We show that our {\PF } classifier in fact achieves this guarantee (Theorem \ref{thm:pf}).

In Section \ref{sec:pf_error}, we consider a more streamlined class of groups: in particular, $\mathcal{G}$ contains all linearly separable groups. For ease of exposition, we define the error $\err_g(h) = \sum_{i \in g} [1 - u_i(h)]$, and recast the discussion in terms of it.\footnote{While a fundamentally similar discussion can be done in terms of the utilities, using errors instead leads to an easier handling of the constants involved.} In this case, we seek a much stronger guarantee: $\err_g(h) \le \err_g(h^*_g) + \gamma$. The general form of the optimization problem we solve is as follows:
\begin{align*}
    \min_{h \in \Delta(\hyp)} \quad & \err_\mathcal{N}(h) \\
\text{such that } \forall g \in \gro,  \quad &  \err_g(h^*_g) - \err_g(h) + \gamma \ge 0.
\end{align*}

As discussed in more detail later, to solve the above problem we need to deal with the non-convex constraints. We outline a method ({\BeFair}) to do so when $\mathcal{G}$ consists of linearly separable groups. We will also look for a slightly weaker guarantee as follows: $\err_g(h) \le \delta \cdot \err_g(h^*_g) + \gamma$, for some $\delta \ge 1$ -- weaker because now $\err_g(h)$ has a slightly larger target $\delta u_g(h^*_g)$ to approximate. Our techniques extend seamlessly to such a formulation also.

\section{Best-effort guarantee for all groups}\label{sec:pf}
The first question to ask is whether there is a fundamental limit on how well one can hope to do with respect to fairness in the setting where $\mathcal{G} = 2^\mathcal{N}$. Since we are dealing with a notion of fairness that is measured relative to the family of classifiers at hand, we first want to understand what the best guarantee that can be given is (in the form of a worst-case bound), with no conditions on the type of classifiers used. 

\begin{restatable}{theorem}{lowerbound} \label{thm:lowerbound}
On any data set $\mathcal{N}$, there is no randomized classifier $h$ (for some $\mathcal{H}$) such that for all $g \subseteq \mathcal{N}$ admitting a perfect classifier $h^*_g \in \mathcal{H}$ (i.e., $u_g(h^*_g) = 1$), we have $u_g(h) > \frac{|g|}{|\mathcal{N}|}$.
\end{restatable}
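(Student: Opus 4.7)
The plan is to exhibit an adversarial instance $(\mathcal{N}, \mathcal{H})$ on which the strict inequality $u_g(h) > |g|/|\mathcal{N}|$ cannot hold simultaneously across all perfect-classifiable groups $g$. The guiding intuition is to choose $\mathcal{H}$ so that the groups of size $n-1$ (the coset-singletons $\mathcal{N}\setminus\{j\}$) each admit a \emph{different} perfect classifier in $\mathcal{H}$; any randomized $h$ is then forced to trade off accuracy across these groups, and summing the desired strict inequalities will yield a total that exceeds what any $h \in \Delta(\mathcal{H})$ can attain.

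For any $n = |\mathcal{N}| \ge 2$ (with distinct feature vectors, which can be arranged), I would take $\mathcal{H} = \{h_i, \bar{h}_i\}_{i \in \mathcal{N}}$, where $h_i$ agrees with the true label on every point except $i$ and $\bar{h}_i$ is its complement. This family is trivially closed under complementation, and for each $j \in \mathcal{N}$ the coset-singleton $\mathcal{N}\setminus\{j\}$ is perfectly classified by $h_j$, giving $n$ witnesses whose target values $|g|/|\mathcal{N}| = (n-1)/n$ sum to exactly $n-1$.

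Next, I would parametrize an arbitrary $h \in \Delta(\mathcal{H})$ as $h = \sum_i (p_i h_i + q_i \bar{h}_i)$ with $P = \sum_i p_i$, $Q = \sum_i q_i$, and $P + Q = 1$. A direct calculation gives $u_k(h) = (P - p_k) + q_k$ for each point $k$, and hence
\[
u_{\mathcal{N} \setminus \{j\}}(h) \;=\; \frac{(n-2)P + p_j + Q - q_j}{n-1}.
\]
Summing over $j$ and simplifying using $P + Q = 1$ collapses the total to $(n-2)P + 1$. If the strict inequality $u_{\mathcal{N}\setminus\{j\}}(h) > (n-1)/n$ held for every $j$, the sum would strictly exceed $n-1$, forcing $(n-2)P > n-2$, which is impossible for any $n \ge 2$ since $P \le 1$. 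Hence at least one coset-singleton $g$ is a perfect-classifiable witness with $u_g(h) \le |g|/|\mathcal{N}|$.

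The only conceptual step is spotting the adversarial $\mathcal{H}$ in which the perfect classifiers for the various $(n-1)$-sized groups are mutually incompatible; the remaining bookkeeping is routine. The underlying reason the bound $|g|/|\mathcal{N}|$ is unimprovable is that the per-group targets sum to exactly $n-1$ across this coset-singleton family, which precisely matches the maximum achievable value of $\sum_j u_{\mathcal{N}\setminus\{j\}}(h)$ over $h \in \Delta(\mathcal{H})$.
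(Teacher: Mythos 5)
Your proof is correct, and the mechanism is the same one the paper uses --- build an adversarial $\mathcal{H}$ in which the perfect classifiers for different groups are mutually incompatible, then sum the desired strict inequalities over a family of perfectly classifiable groups whose targets $|g|/n$ exactly exhaust the total utility any $h \in \Delta(\mathcal{H})$ can deliver. The difference is in the scale of the instance. The paper gets away with the minimal version: $\mathcal{H} = \{h_1, \bar{h}_1\}$, where $h_1$ perfectly classifies some $g_1$ and $\bar{h}_1$ perfectly classifies $g_2 = \mathcal{N}\setminus g_1$; any mixture gives $u_{g_1}(h) = p$ and $u_{g_2}(h) = 1-p$, and since the targets $|g_1|/n$ and $1 - |g_1|/n$ sum to $1 = p + (1-p)$, both strict inequalities cannot hold. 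Your construction with $2n$ classifiers and the $n$ coset-singletons $\mathcal{N}\setminus\{j\}$ reaches the same contradiction (targets summing to $n-1$ versus an achievable total of $(n-2)P + 1 \le n-1$), at the cost of some bookkeeping that the two-classifier instance makes unnecessary. Your algebra checks out, including the $n=2$ edge case; the only caveat, which applies equally to the paper's proof, is the implicit assumption that the data points are distinguishable enough that the required classifiers exist as functions of the features --- you flag this, the paper does not.
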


This theorem shows that, in terms of how much utility is accrued by each of the perfectly classified sets, the best bound we can hope to target is one proportional to the fractional size of the given set of data points. Note that for every instance, there exists some $\mathcal{H}$, such that the claim of the theorem holds -- this is not true more generally in the sense that there could exist some $\mathcal{H}$ for which the claim does not hold as shown by Example 2 (in the Appendix).

\subsection{Proportional Fairness ({\PF}) Classifier}
We now demonstrate a classifier that matches the above bound as long as the utilities $u_i(h_j)$'s are binary (which holds in our model, but could also be encountered in other scenarios involving resource allocation discussed in \cref{sec:related}); as mentioned before, this captures multi-class classification as well. The Proportional Fairness classifier is defined as follows:

\begin{definition}[Proportional Fairness ({\PF})]
\label{def:pf}
Given an instance $(\mathcal{N}, \mathcal{H})$, the {\PF} classifier $h_{\PF}$ is the one that maximizes $f(h) \triangleq \sum_{i \in \mathcal{N}} \ln u_i(h)$ over all $h \in \Delta(\mathcal{H})$.
\end{definition}

As mentioned before, the proportional fairness objective has had a long history in network resource allocation literature \citep{kelly1998rate}. However, to the best of our knowledge, its applicability to the classification problem, and the implications thereof, have never been established before.

We now show that the {\PF} classifier achieves a guarantee matching the worst-case bound in Theorem \ref{thm:lowerbound}.

\begin{restatable}{theorem}{pf}
\label{thm:pf}
For any subset $g \subseteq \mathcal{N}$ that admits a perfect classifier $h^*_g \in \mathcal{H}$ (i.e., $u_g(h^*_g) = 1$) we have $u_g(h_{\PF}) \ge \frac{|g|}{|\mathcal{N}|}$.
\end{restatable}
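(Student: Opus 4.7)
The plan is to exploit the first-order optimality conditions of the concave program defining $h_{\PF}$. Since $\mathcal{H}$ is closed under complementation, the mixture $\tfrac{1}{2}h+\tfrac{1}{2}\bar h$ achieves utility $1/2$ on every data point, so by mixing any candidate with this ``base'' distribution we can ensure every $u_i(\cdot)>0$. In particular $f(\cdot)=\sum_i \ln u_i(\cdot)$ is finite and concave on a nonempty open subset of $\Delta(\mathcal{H})$, and the maximizer $h_{\PF}$ satisfies $u_i(h_{\PF})>0$ for every $i\in\mathcal{N}$.

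The core step is a variational argument. For any $h'\in\Delta(\mathcal{H})$, consider the perturbation $h_\epsilon=(1-\epsilon)h_{\PF}+\epsilon h'$, which also lies in $\Delta(\mathcal{H})$. Optimality of $h_{\PF}$ forces $\tfrac{d}{d\epsilon}f(h_\epsilon)\big|_{\epsilon=0^+}\le 0$, which, because $u_i$ is linear in the mixing distribution, simplifies to
\begin{equation*}
\sum_{i\in\mathcal{N}}\frac{u_i(h')}{u_i(h_{\PF})}\;\le\;|\mathcal{N}|.
\end{equation*}
This is the ``proportional fairness'' inequality that I will instantiate with $h'=h^*_g$.

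Now specialize to a group $g$ with a perfect classifier $h^*_g$: then $u_i(h^*_g)=1$ for every $i\in g$ and $u_i(h^*_g)\ge 0$ otherwise. Dropping the nonnegative terms for $i\notin g$ yields
\begin{equation*}
\sum_{i\in g}\frac{1}{u_i(h_{\PF})}\;\le\;|\mathcal{N}|.
\end{equation*}
The final step is to convert this sum of reciprocals into a lower bound on the average utility via the Cauchy--Schwarz (equivalently, AM--HM) inequality:
\begin{equation*}
\frac{|g|^2}{\sum_{i\in g}u_i(h_{\PF})}\;\le\;\sum_{i\in g}\frac{1}{u_i(h_{\PF})}\;\le\;|\mathcal{N}|,
\end{equation*}
which rearranges to $u_g(h_{\PF})=\tfrac{1}{|g|}\sum_{i\in g}u_i(h_{\PF})\ge |g|/|\mathcal{N}|$.

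The only subtle point I expect is the positivity of $u_i(h_{\PF})$, which is needed so that the log-objective and the first-order condition are well defined; the complementation assumption on $\mathcal{H}$ is exactly what rules out the degenerate case. Everything else is a standard Nash-social-welfare / KKT computation followed by a one-line Cauchy--Schwarz bound, so I do not anticipate further obstacles.
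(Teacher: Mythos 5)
Your proposal is correct and follows essentially the same route as the paper's proof: the first-order optimality condition of the log-objective yields $\sum_{i}u_i(h')/u_i(h_{\PF})\le|\mathcal{N}|$, which is specialized to the perfect classifier for $g$ and combined with the AM--HM inequality. Your explicit justification that $u_i(h_{\PF})>0$ via the complement classifier is a nice touch of rigor that the paper leaves implicit, but it does not change the argument.
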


Thus, the {\PF} classifier achieves, on any subset, an accuracy that is \emph{proportional} to the accuracy of the best classifier on that subset scaled by the fractional size of the subset. As mentioned earlier, the use of perfectly classifiable subsets in our analysis is just for the ease of exposition. The results can be suitably translated to using all possible subsets.  For example, the following is a simple corollary of \cref{thm:pf}:
\begin{restatable}{corollary}{pfcor} \label{cor:pf}
For any subset $g \subseteq \mathcal{N}$, with its best classifier $h^*_g = \arg \max_{h \in \mathcal{H}} u_g(h)$, we have $u_g(h_{\PF}) \ge \alpha \left[ u_{g}(h^*_g) \right]^2$, where $\alpha = \frac{|g|}{n}$.
\end{restatable}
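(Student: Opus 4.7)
The plan is to deduce Corollary \ref{cor:pf} from Theorem \ref{thm:pf} by applying the theorem not to $g$ itself, but to the sub-population of $g$ on which $h^*_g$ happens to be perfect. Write $p \triangleq u_g(h^*_g)$, and let $g' \triangleq \{i \in g : h^*_g(x_i) = y_i\}$ be the set of data points in $g$ that $h^*_g$ classifies correctly. Since $h^*_g$ is a deterministic classifier in $\mathcal{H}$ (by the definition of $h^*_g$ as an $\arg\max$ over $\mathcal{H}$), the cardinality $|g'|$ is exactly $p \cdot |g|$ (an integer).

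Next I would observe that $h^*_g$ is a \emph{perfect} classifier for $g'$: by construction $u_i(h^*_g) = 1$ for every $i \in g'$, so $u_{g'}(h^*_g) = 1$. Theorem~\ref{thm:pf} therefore applies to the subset $g'$ and yields
\begin{equation*}
u_{g'}(h_{\PF}) \ \ge\ \frac{|g'|}{n}\ =\ \frac{p\,|g|}{n}\ =\ \alpha p.
\end{equation*}

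Finally, I would translate this bound on $g'$ into the desired bound on $g$ by monotonicity of the sum of nonnegative utilities. Since $g' \subseteq g$ and $u_i(h_{\PF}) \ge 0$,
\begin{equation*}
u_g(h_{\PF})\ =\ \frac{1}{|g|}\sum_{i \in g} u_i(h_{\PF})\ \ge\ \frac{1}{|g|}\sum_{i \in g'} u_i(h_{\PF})\ =\ \frac{|g'|}{|g|}\,u_{g'}(h_{\PF})\ \ge\ p \cdot \alpha p\ =\ \alpha \left[u_g(h^*_g)\right]^2,
\end{equation*}
which is exactly the claimed bound.

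There is essentially no main obstacle here; the argument is a two-line reduction. The only mild subtlety to flag is that the reduction genuinely uses the fact that $h^*_g$ ranges over $\mathcal{H}$ (deterministic classifiers), so that ``points correctly classified by $h^*_g$'' is a well-defined sub-population of integer size $p|g|$. If $h^*_g$ were allowed to be randomized, one would need a slightly more careful averaging argument, but with the definitions as stated in the paper no such complication arises.
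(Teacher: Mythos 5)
Your proof is correct and is essentially identical to the paper's own argument: the paper takes $T$ to be the largest perfectly classifiable subset of $g$ (which is exactly your $g'$, the points of $g$ that $h^*_g$ classifies correctly), applies Theorem~\ref{thm:pf} to $T$, and uses the same monotonicity inequality $u_g(h_{\PF}) \ge \frac{|T|}{|g|} u_T(h_{\PF})$ to conclude. No gaps; your remark about $h^*_g$ being deterministic is a fair observation that the paper leaves implicit.
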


Assuming black-box access to an agnostic learning oracle, the {\PF} classifier can be computed using a primal dual style algorithm (details in the Appendix, or see \citet{bhalgat2013optimal} for similar results). In our experiments, we just use a heuristic instead (see Section \ref{sec:experiments}, and also the Appendix). We also do not explicitly discuss the generalization properties -- but we would expect that {\PF} is not prone to overfitting, since all possible groups have to be given a guarantee on performance (details in the Appendix).

\paragraph{Interpreting the results:}
\cref{thm:pf} and \cref{cor:pf} neatly characterize how {\PF} achieves the best possible theoretical bound . One drawback of applying {\PF} in practice is that the theoretical guarantee is proportionally lower for smaller groups, notwithstanding the fact that, in practice, the accuracy of {\PF} on small groups is much better than what is given by these bounds (see \cref{sec:experiments}). As far as the bounds as concerned, the reason that we have to settle for an accuracy proportionally lower for smaller subsets is that the guarantee has to hold for all possible subsets. Some of these subsets could be extremely complex, and possibly unreasonable in most practical settings. As will see next, we can do much better with more restricted classes of groups.
\section{Best-effort guarantees for linearly separable groups}\label{sec:pf_error}

In this section, we limit $\mathcal{G}$ to be a more streamlined class of groups, and aim for a much stronger guarantee. We then devise an algorithm that achieves such a guarantee. As mentioned in Section \ref{sec:model}, we want to find a a randomized classifier $h \in \Delta(\mathcal{H})$ that, for every group $g \in \mathcal{G}$, achieves an absolute error which is within an additive factor $\gamma$ from that of the optimal classifier $h^*_g$ for $g$. In particular, the optimization problem we would like to solve is the following:
\begin{problem}[{\BeFair}$(\gamma)$] For a given hypothesis space $\mathcal{H}$, a class of groups $\mathcal{G}$, and $\gamma \ge 0$,
\begin{align*}
    \min_{h \in \Delta(\hyp)} \quad & \err_{\mathcal{N}}(h) \\
\text{such that } \forall g \in \gro,  \quad &  \err_g(h^*_g) - \err_g(h) + \gamma \ge 0.
\end{align*}
\end{problem}

In particular, we consider $\mathcal{H}$ to be the space of linear hypotheses, and $\mathcal{G}$ the class of all linearly separable groups.\footnote{If $g \in \mathcal{G}$, then $g$ and $\mathcal{N} \setminus g$ are linearly separable.} As mentioned earlier, despite using linear hypotheses and groups, we are faced with a non-convex problem. It can be seen that the non-convexity stems from the best-effort constraint -- while the terms $\err_g(h)$ and $\err_g(h^*_g)$ can be individually made convex by using standard surrogate loss functions, their combination obtained by subtracting one from the other cannot. Note that such a difficulty does not arise for the more absolute notions of fairness such as \emph{parity}, as is the case with the techniques in \citet{KearnsNRW18}. Also, even in our setting, if $\mathcal{G}$ were a small finite set, then all the optimal classifiers $h^*_g$ could be calculated offline, and the corresponding constraints listed to form a simpler convex optimization problem.

We redefine the {\BeFair}($\gamma$) as follows to explicitly factor the hidden optimization problem of finding $h^*(g)$ into the corresponding constraint for $g$; by using the fact that if $\err_g(h^*_g) - \err_g(h) + \gamma \ge 0$, then $\err_g(\hpr) - \err_g(h) + \gamma \ge 0$ for any $\hpr \in {\mathcal{H}}$.

\begin{problem}[{\BeFair}$(\gamma)$]\label{prob:GPF}
\begin{align*}
    \min_{h \in \Delta(\hyp)} \quad & \err_{\mathcal{N}}(h) \\
\text{such that } \forall g \in \gro, \hpr \in \mathcal{H},  \quad &  \err_g(\hpr) - \err_g(h) + \gamma \ge 0.
\end{align*}
\end{problem}



We first define the partial Lagrangian corresponding to Problem \ref{prob:GPF}. Let $\phi(g,h,\hpr) \triangleq - \err_g(\hpr) + \err_g(h) - \gamma$. With dual variables $\lambda_{g, \hpr}$ for every $g \in \mathcal{G}$ and $\hpr \in \mathcal{H}$:
\begin{align*}
    L(h,\lambda) \triangleq \err(h) + \sum_{g \in \gro, \hpr \in \hyp} \lambda_{g,\hpr} \phi(g,h,\hpr).
\end{align*} 

In order to have a convergent algorithm for our optimization, we will restrict the dual space to the bounded set $\Lambda = \{\lambda \in \mathcal{R}_{+}^{ |\mathcal{G} \times \mathcal{H}|} : \Vert\lambda\Vert_1 \le C\}$, where $C$ will be a parameter in our algorithm. Then, by the Minimax Theorem, solving Problem \ref{prob:GPF} is equivalent to solving the following:
\begin{align}\label{eq:minmax}
    \min_{h \in \Delta(\hyp)} \max_{\lambda \in \Lambda}  L(h,\lambda) = \max_{\lambda \in \Lambda} \min_{h \in \Delta(\hyp)}  L(h,\lambda).
\end{align}

The minmax problem can be viewed as a two player zero-sum game: The set of pure strategies for the \emph{learner} (corresponding to the primal) corresponds to $\mathcal{H}$ -- each deterministic classifier $h \in \mathcal{H}$ is a valid pure strategy. For the \emph{adversary} (corresponding to the dual), the pure strategies in $\Lambda$ can be either the all zeros vectors, or a particular choice of $(g,\hpr) \in \mathcal{G}\times\mathcal{H}$. Then, solving Problem \ref{prob:GPF}, via the minmax formulation in Equation \ref{eq:minmax}, is the same as finding an equilibrium of the corresponding two-player zero-sum game with $L(h,\lambda)$ as the payoff for the dual player.

\subsection{Solving the {\BeFair}($\gamma$) problem via a convex relaxation:} \label{sec:solvingbefair}
The equilibrium of a two-player zero-sum game can be found using Fictitious Play, an iterative algorithm which is guaranteed to converge\footnote{The asymptotic convergence is usually fast in practice, and especially so in our experiments.} given that we can solve for the best responses of both players \citep{robinson1951iterative}. Fictitious Play \citep{brown1949some} proceeds in rounds alternating between the primal and dual player: in each round, each player chooses a best response to the the mixed strategy that randomizes uniformly over the empirical history of the other's strategies. A formal description is given in Algorithm \ref{alg:befair}.

\paragraph{Learner's best response:} For a given mixed strategy $\lambda$ of the adversary, the learner needs to solve:
\begin{align*}
    \min_{h \in \Delta(\hyp)}~ \err(h) + \sum_{g \in \gro, \hpr \in \hyp} \lambda_{g,\hpr} \err_g(h).
\end{align*}

Since the optimum is obtained at the corner points of the feasible region of the strategy space, we need only consider pure strategies for the optimization problem above. The learner's problem then becomes:
\begin{align*}
    \min_{h \in \hyp}~ \sum_i w_i \indicator[h(x_i) \neq y_i],
\end{align*}
where $w_i \triangleq 1 + \sum_{g \in \gro, \hpr \in \hyp} \lambda_{g,\hpr} \indicator[g(x_i) = 1]$, and this can be solved since we assume black-box access to a weighted {\ERM} oracle. In practice, many heuristics (like Logistic Regression, Boosting, etc.) are used effectively for this problem, even though it is known to be hard in the worst case \citep{feldman2012agnostic}.



\paragraph{Adversary's best response:} The adversary's best response problem is more involved and will require some novel techniques to solve. Again, we need to optimize only over pure strategies. With a bit of analysis, the dual best response problem can be seen to be equivalent to solving, for a given $h \in \Delta(\mathcal{H})$:
\begin{align}\label{eqn:advprob}
    \min_{g \in \gro, \hpr \in \hyp} \err_g(\hpr) - \err_g(h).
\end{align}
For all $i \in \mathcal{N}$, define $t_i \triangleq \E \indicator[h(x_i) \neq y_i]$. Then the above objective can be written as 
\begin{align}\label{eqn:advobj}
    & \sum_i \indicator[g(x_i) = 1] \left( \indicator[\hpr(x_i) \neq y_i] -  \E \indicator[h(x_i) \neq y_i] \right) \nonumber \\
    = & \sum_i \indicator[g(x_i) \neq -1] \left( \indicator[\hpr(x_i) \neq y_i] - t_i \right),
\end{align}
which is non-convex. For each $i$, we would like to convexify it differently when $t_i = 0$ or $t_i > 0$. (The convex relaxations are not exact.)

Since we only consider the case where both $\mathcal{G}$ and $\mathcal{H}$ consist of linear hypotheses, define $z_{ig} \triangleq x_i^\intercal \theta_g$ and $z_{i\hpr} \triangleq - y x_i^\intercal \theta_{\hpr}$,
where $\theta_g$ and $\theta_{\hpr}$ are the coefficients of the linear hypotheses $g$ and $\hpr$ respectively.

For each $i \in \mathcal{N}$, we add the term $\indicator[z_{ig} > 0] \cdot \indicator[z_{i\hpr} > 0] - t_i\indicator[z_{ig} > 0]$ to the objective. The first term, $\indicator[z_{ig} > 0] \cdot \indicator[z_{i\hpr} > 0]$, is treated as $e^{z_{ig}} \cdot e^{z_{i\hpr}} = e^{z_{ig} + z_{i\hpr}}$; and the second term, $t_i\indicator[z_{ig} > 0]$, is replaced by $t_i(1 - e^{-z_{ig}})$, whence the whole objective becomes $e^{z_{ig} + z_{i\hpr}} + t_i(e^{-z_{ig}} - 1)$, which is convex.

Therefore, we need to solve 
\begin{align}\label{eqn:convexify}
    \min_{\theta_g, \theta_\hpr} \sum_{i \in \mathcal{N}} e^{z_{ig} + z_{i\hpr}} + t_i (e^{-z_{ig}} - 1),
\end{align}
which can be done via convex optimization methods.

\begin{algorithm}[H]
   \caption{Solving {\BeFair($\gamma$)}}
   \label{alg:befair}
\begin{algorithmic}
    \STATE {\bfseries Input:} data set $\mathcal{N}$, $\gamma \ge 0$, number of rounds $T$.
    \STATE Initialize by setting $h_0$ to be some classifier in $\mathcal{H}$, and $\lambda_0$ to be the zero vector.
    \FOR{$t = 1,\ldots T$:}
        \STATE $\bar{h} \gets \mbox{uniform distribution over } \{h_0, \ldots, h_{t-1}\}$
        \STATE $\bar{\lambda} \gets \frac{1}{t}\sum_{t^\prime < t}\lambda_{t^\prime}$
        \STATE $h_t \gets$ Learner's best response to $\bar{\lambda}$
        \STATE $\lambda_t \gets$ Adversary's best response to $\bar{h}$
    \ENDFOR
    
    \STATE {\bfseries Return:} $\bar{\lambda}_t$
\end{algorithmic}
\end{algorithm}

The solution returned by Algorithm \ref{alg:befair} has to be checked for feasibility with respect to Problem \ref{prob:GPF} -- this tells us if the problem is feasible to begin with. Also, the solution to the Adversary's problem (Equation \ref{eqn:advprob}) can potentially be improved by alternately optimizing for $g$ and $\hpr$, \`a la Expectation-Maximization: For a fixed $\hpr$, we can optimize $g$ by using Equation \ref{eqn:advobj} and a convexification analogous to Equation \ref{eqn:convexify}. The converse problem of optimizing $\hpr$, for a fixed $g$, is just a weighted ERM problem.

Note that our technique works as is even for groups that can be defined in terms of any basis expansion of a limited size (a commonly used way of capturing non-linear relationships with linear methods).  For example, if there are two numerical features $x_1,x_2$, then by encoding the values $x_1^2,x_1x_2,x_2^2$ as additional features, we can solve the problem over all groups defined via conic sections (in the feature space).  Similarly, a Boolean AND of binary features can also be written as a linear constraint: for binary features $x_1,x_2$ taking values in $\{0,1\}$, $x_1 \wedge x_2$ is equivalent to $x_1 + x_2 \ge 2$. Extensions to more general classes of $\mathcal{G}$ and $\mathcal{H}$ are an interesting open problem.

\subsection{A more general version of {\BeFair}($\gamma$):} For $\delta \ge 1$, we can generalize Problem \ref{prob:GPF} as follows:
\begin{problem}[{$\delta$-\BeFair}$(\gamma)$]\label{prob:dGPF}
\begin{align*}
    \min_{h \in \Delta(\hyp)} \quad & \err_{\mathcal{N}}(h) \\
\text{such that } \forall g \in \gro,  \quad &  \delta \cdot \err_g(h^*_g) - \err_g(h) + \gamma \ge 0.
\end{align*}
\end{problem}
The only difference from Problem \ref{prob:GPF} is that we have slightly weaker constraints: the error of $h$ on $g$ is compared with $\delta$ times the least possible error on $g$. With a straightforward modification, the overall technique in Section \ref{sec:solvingbefair} works for this problem too. 

For a fixed $\delta$, computing the quantity $\max_{g \in \mathcal{G}}\err_g(h) - \delta \cdot \err_g(h^*_g)$ gives us a way of measuring the fidelity of any given classifier $h$. To do so, the Adversary's problem can be solved (as shown) to find the smallest $\gamma$ for which $h$ becomes feasible for the constraints in Problem \ref{prob:dGPF} (i.e., satisfies the best-effort guarantees). We discuss this in more detail in Section \ref{sec:pfvsbefair}.

\section{Experiments}\label{sec:experiments}


 The primary goal of our experiments is to show that the {\BeFair} algorithm works extremely well in practice. As we discuss below, {\BeFair} achieves its intended purpose (as discussed in the previous section), by achieving strong best-effort fairness guarantees uniformly over all linearly separable groups. In particular, it is able to achieve a performance that is a close approximation of the best possible on these groups (as given in Problem \ref{prob:dGPF}) for small values of $\delta$ and $\gamma$. In addition, we will also evaluate the {\PF} algorithm and show how it behaves differently from {\BeFair}, on account on having to provide guarantees for all possible groups, even those corresponding to a high VC dimension.  We also show how, in practice, the performance of {\PF} seems to be better than what is suggested by the worst-case lower bound via \cref{thm:pf} (which is proportionally weaker for smaller groups).

We work with two data sets: \texttt{adult}, the Adult\footnote{\tiny 48842 instances, 14 features, \url{https://archive.ics.uci.edu/ml/datasets/Adult}} dataset from the UCI Machine Learning Repository, and \texttt{compas}, the COMPAS\footnote{\tiny 6172 instances, 8 features, \url{https://www.propublica.org/article/how-we-analyzed-the-compas-recidivism-algorithm}} Risk of Recidivism data set \citep{angwin2016machine}. Both have binary labels and a mixture of numerical and categorical features. Using these data sets, we compare and contrast the following 
methods (recalling their definitions from earlier):
\begin{enumerate}
    \item {\PF}: As the exact solution of {\PF} is computationally inefficient, we use {\hPF}, a heuristic (details in the Appendix) inspired by Reweighted Approval Voting. \citep{aziz2017justified}. In what follows, we refer to {\hPF} as {\PF}. 
    \item {$\delta$-\BeFair} as described in Section \ref{sec:pf_error}.
    \item {\ERM} methods: We use {\LR} (Logisitic Regression), since it performs best here. We also compare overall accuracy with {\Ada} (AdaBoost), an ensemble method.
    \item The lower bound given by \cref{cor:pf}.
\end{enumerate}

\begin{table}
\small
\setlength\tabcolsep{4pt}
\centering
\begin{tabular}{|c|c|c|c|c|c|}
\hline
    &\LR &\Ada & \hPF & $1.0$-\BeFair & $1.1$-\BeFair\\ \hline
    \texttt{adult} & 0.83 & 0.84 & 0.78 & 0.79 & 0.80 \\ \hline
     \texttt{compas} & 0.75 & 0.75 & 0.64 & 0.70 & 0.71 \\ \hline
\end{tabular}
\caption{Overall test accuracy of various methods.}
\label{tab:overall}
\end{table}

In \cref{tab:overall}, we present the overall accuracy of various methods, i.e., that measured on the entire test set. As there is a trade-off between ensuring fairness for groups and maximizing overall accuracy, {\PF} has a lower overall accuracy compared to other methods. $\delta$-{\BeFair} is much closer to the ERM baselines (especially as seen on the \texttt{compas} dataset, even for a small value of $\delta = 1.1$. Larger values of $\delta$ can only increase accuracy as the fairness constraints become laxer.

\subsection{Evaluating the performance of  {\BeFair}}\label{sec:pfvsbefair}

We first define Maximum Additive Error ($\mae_\delta$), parametrized by $\delta$, of any given classifier $h$.
\begin{definition}[$\mae_\delta(h)$]\label{def:maedelh}
For a given $h$, and $\delta$, 
$\mae_\delta(h) = \max_{g \in \mathcal{G}}\err_g(h) - \delta \cdot \err_g(h^*_g) $.
\end{definition}
For a given $h$, $\mae_\delta(h)$ specifies, for the worst-off group $g$, how much difference there is between the error of $h$ and that of the best classifier for $g$ scaled by $\delta$. 

For instance, $\mae_\delta(\textsc{BeFair})$ can be computed by searching over different values of $\gamma$ to pick the smallest that gives a feasible solution for the $\delta$-\BeFair($\gamma$) problem. On the other hand, $\mae_\delta(\textsc{ERM})$ can be computed by solving the Adversary's problem (Equation \ref{eqn:advobj} modified as per $\delta$) for $h = \ERM$. 

In Figure \ref{fig:alpha_vs_beta}, we compare $\mae_\delta$ of {\ERM} and {\BeFair} for $\delta = 1.0,1.05, \ldots, 1.30$. Errors are reported as a percentage of the entire data set.




\begin{figure}[ht]\centering
\begin{subfigure}{0.44\linewidth}\centering
\includegraphics[scale=0.26]{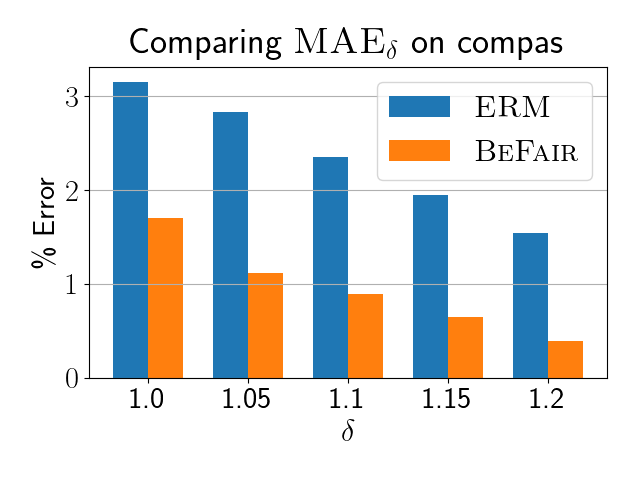}
\end{subfigure}%
\quad
\begin{subfigure}{0.44\linewidth}\centering
\includegraphics[scale=0.26]{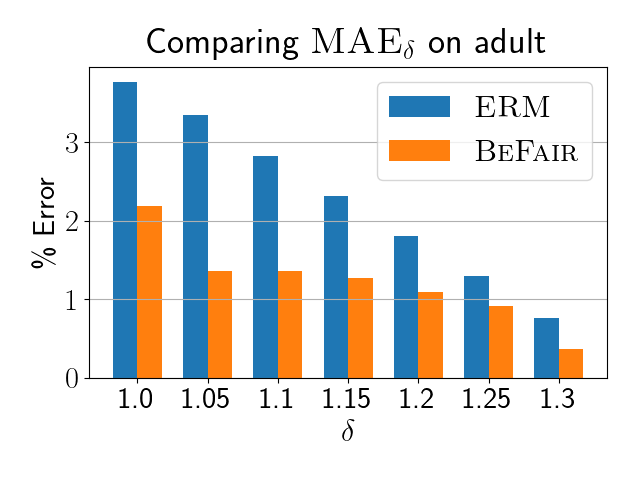}
\end{subfigure}
\caption{Comparing $\mae_\delta$ between {\ERM} and {\BeFair} for varying values of $\delta$.}
\label{fig:alpha_vs_beta}
\end{figure}

Many key observations can be made from this plot: 
\begin{enumerate}[label = (\alph*)]
    \item As we increase $\delta$, the $\mae_\delta$ of both {\ERM} and {\BeFair} decrease. This is because the best-effort constraints get laxer with increasing $\delta$.
    \item Even for $\delta = 1.0$, {\BeFair} achieves an improvement  over {\ERM} of close to 50\% (since $\mae_\delta(\textsc{BeFair})$ is about half of $\mae_\delta(\textsc{ERM})$) on \texttt{compas}, and 33\% on \texttt{adult}, in the $\mae_\delta$ value.
    \item For a slightly larger value of $\delta = 1.10$, we get an extremely low value for $\mae_\delta(\textsc{BeFair})$ of around $1\%$, which means {\BeFair} gets a strong approximation. Therefore, {\BeFair} is able to achieve an multiplicative error of $0.1$, with an additive error of around $1$\%.
    \item $\mae_\delta(\textsc{ERM})$ decreases linearly with $\delta$, while most of the improvement in $\mae_\delta(\textsc{BeFair})$ comes from increasing $\delta$ from $1$ to $1.05$. In other words, {\BeFair} is able to extract a bigger improvement with a small increase of $\delta$.
\end{enumerate}
Overall, {\BeFair} achieves low $\mae_\delta$ for small values of $\delta = 1.05, 1.1$.


\subsection{Comparison of {\PF} with {\BeFair}:}
In \Cref{fig:compas_pf_vs_fp_cum}, we order (on the $x$ axis) the data points in the test set in ascending order of their scores (i.e., confidence of predicting the true label) given by {\LR}. For each point $x$, the $y$ axis shows the accuracy of various methods on the subset consisting of all points from $0$ through $x$.
If we imagine the {\LR} scores as a measure of how easy the points are to classify correctly: then we see that {\PF} gets a more uniform error on all these sets, whereas $1.1$-{\BeFair} has lower error on sets with higher {\LR} scores. This is because the groups where {\BeFair} has high error are probably too complex to be meaningful practically. On the other hand, {\PF} must provide uniform guarantees over all groups, even those corresponding to large VC dimensions, and therefore has a uniformly higher error on them. 
\begin{figure}
    \centering
    \includegraphics[scale=0.15]{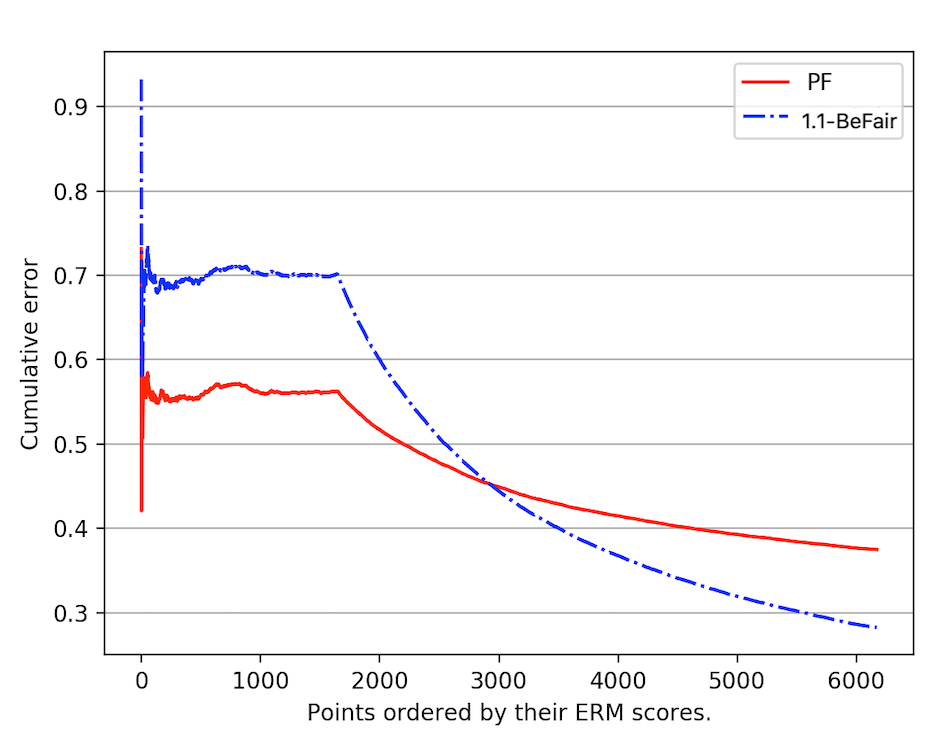}
    \caption{Error on subsets of varying sizes (\texttt{compas}): $x$ axis denotes points in ascending order of {\LR} scores. $y$ axis denotes error accrued on the subset containing points up to $x$.} 
    \label{fig:compas_pf_vs_fp_cum}
\end{figure}

\subsection{Comparison of {\PF} with the theoretical lower bound}\label{subsec:exp:lowerbound}
In \Cref{fig:compas_pf-vs-erm_cum} (left), we order (on the $x$ axis) the data points in the test set in ascending order of their scores (i.e., confidence of predicting the true label) given by {\Ada}. For each point $x$, the $y$ axis shows the accuracy of various methods on the subset consisting of all points from $0$ through $x$. \Cref{fig:compas_pf-vs-erm_cum} (right) does the same with {\PF} scores.

We see that the accuracy of {\PF} is much higher than the worst-case lower bound. {\PF} comes close to the lower bound for larger subsets, especially for those that are easy to classify (see the Appendix for more details). Note that the lower bound \emph{is not monotonic} because it depends on both the size of the subset and the best possible classification accuracy on it (\cref{cor:pf}). Also, in  \Cref{fig:compas_pf-vs-erm_cum} (right), ERM methods do worse because the points with low {\hPF} scores are inherently much harder to classify.

\begin{figure}\centering
\begin{subfigure}{0.44\linewidth}\centering
\includegraphics[scale=0.26]{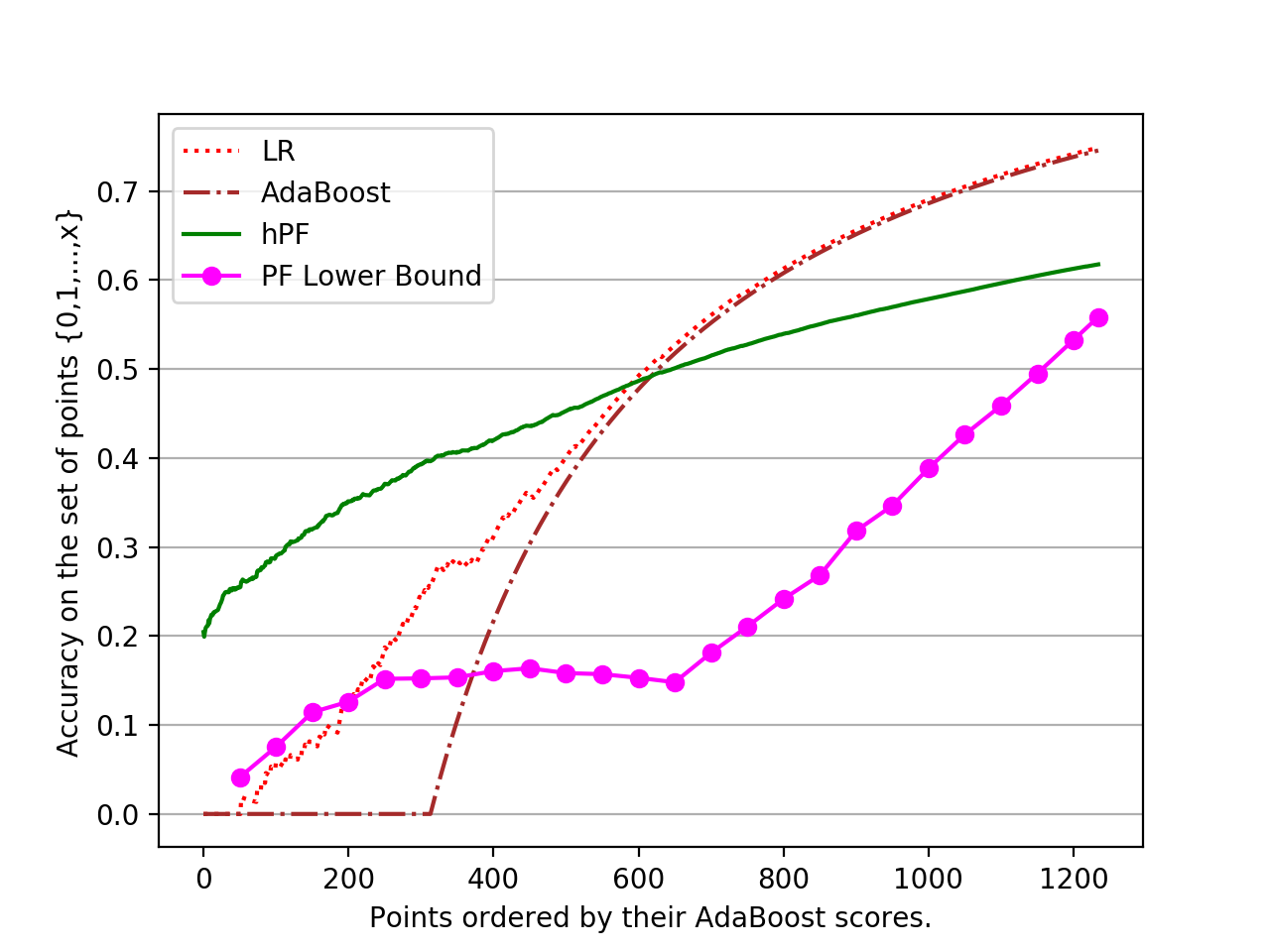}
\end{subfigure}%
\quad
\begin{subfigure}{0.44\linewidth}\centering
\includegraphics[scale=0.26]{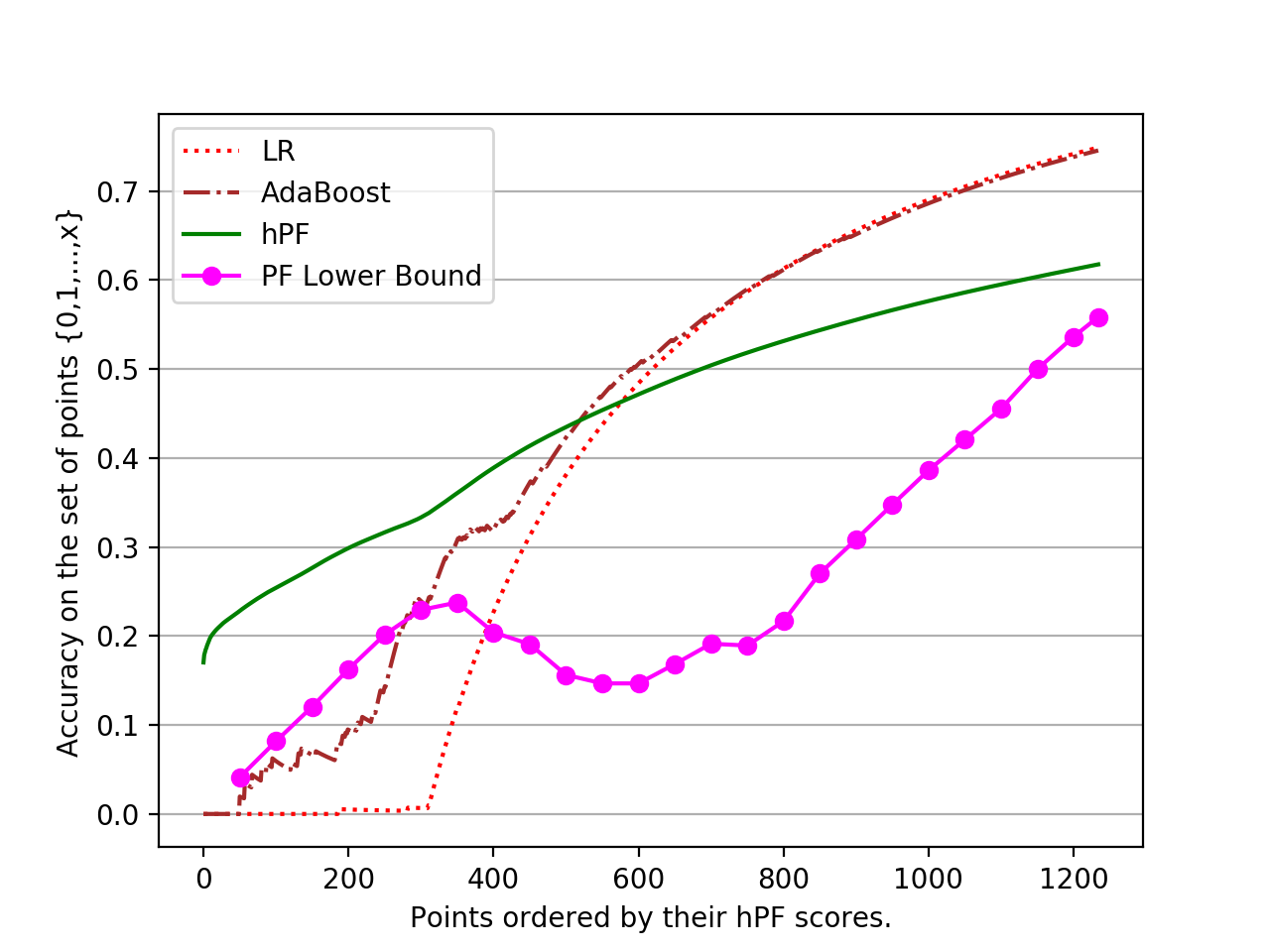}
\end{subfigure}
\caption{Accuracy on subsets of varying size (\texttt{compas}): $x$ axis denotes points in ascending order of {\Ada} (left) and {\hPF} (right) scores. $y$ axis denotes accuracy on the subset containing points up to $x$.}
\label{fig:compas_pf-vs-erm_cum}
\end{figure}

\section{Conclusions}
\label{sec:conclusions}
In this paper, we study group fairness in the (multi-class) classification setting. We propose a notion based on best-effort guarantees, which requires each group in a class $\mathcal{G}$ to have a classification accuracy that is as close as possible to the optimal for that group. When $\mathcal{G}$ consists of all possible groups, we show that {\PF} achieves the theoretical optimum in our setting. When $\mathcal{G}$ consists of linearly separable groups, we can do much better via the {\BeFair} algorithm, which crucially depends on convexification techniques to solve an essentially non-convex problem. We also test our methods on real-world datasets and show that they perform well in practice, especially the {\BeFair} method.


One interesting question for future work is to extend our techniques for more involved classes of groups, say, for example, when $\mathcal{G}$ consists of all groups that can be identified by a fixed neural network. Similar extensions of the hypothesis space $\mathcal{H}$ are also worth looking at. Moreover, in some applications (bail/loan decisions, college admissions, etc.), false negatives and false positives play drastically different roles. Can our framework be extended to deal with such considerations? Can it also be extended to multi-class classification? Note that the guarantees of {\PF} carry over to this setting directly. We would also like to point out that randomized classifiers are not always desirable and have some limitations in practice \citep{cotter2019making}. How to think about best-effort fairness of deterministic classifiers with unknown groups is another interesting open question.

\section*{Acknowledgments}
This work is supported by NSF grant CCF-1637397, ONR award N00014-19-1-2268, and DARPA award FA8650-18-C-7880. Part of this work was done while Yu Cheng was visiting the Institute of Advanced Study.

\bibliography{main.bib}

\newcommand{\NN}{\mathcal{N}}
\newcommand{\HH}{\mathcal{H}}
\newcommand{\poly}{\mathrm{poly}}
\newcommand{\OPT}{\mathrm{OPT}}

\appendix

\section{Omitted proofs}\label{app:proofs}
We first recall Theorem \ref{thm:lowerbound}:
\lowerbound*


\begin{proof}
Suppose there are only two classifiers $h_1, h_2$ in $\mathcal{H}$: $h_1$ classifies set $g_1$ correctly, and $h_2$ (equivalently $\overline{h_1}$) classifies $g_2 = \mathcal{N} \setminus g_1$ correctly. If an algorithm chooses $h_1$ with probability $p \geq 0$ and $h_2$ with probability $1-p$, then the average utility on $g_1$ is $p$, and that on $g_2$ is $1-p$. Clearly, we cannot simultaneously have $p > \frac{|g_1|}{n}$ and $1 - p > 1 - \frac{|g_1|}{n}$.
\end{proof}

Next we look at Theorem \ref{thm:pf}:
\pf*
\begin{proof}
Let $h_\mathsf{\PF}$ be the {\PF} classifier. Thus $h = h_{\PF}$ maximizes $f(h) := \sum_{i \in \mathcal{N}} \ln u_i(h)$. Therefore, for any $h \in \mathcal{H}$ and any $\varepsilon \geq 0$,
\begin{align*}
    f(\varepsilon \cdot h + (1 - \varepsilon) \cdot h_{\PF}) - f(h_{\PF}) 
    = \sum_{i \in \mathcal{N}} \ln u_i(\varepsilon \cdot h + (1 - \varepsilon) \cdot h_{\PF}) - \ln u_i(h_{\PF}) \leq 0.
\end{align*}
Since the above expression attains its maxima at $\varepsilon = 0$, we take the derivative with respect to $\varepsilon$, and evaluate it at $\varepsilon = 0$, to get:
\begin{align*}
    \sum_{i \in \mathcal{N}} (u_i(h) - u_i(h_{\PF})) \cdot \frac{1}{u_i(h_{\PF})} \leq 0.
\end{align*}

Define $n = |\mathcal{N}|$. Rearranging the inequality above, we get $\sum_{i \in \mathcal{N}} \frac{u_i(h)}{u_i(h_{\PF})} \leq n$, and consequently,
\begin{align*}
    \sum_{i \in \mathcal{N}: u_i(h) = 1} \frac{1}{u_i(h_{\PF})} \leq \sum_{i \in \mathcal{N}} \frac{u_i(h)}{u_i(h_{\PF})} \leq n.
\end{align*}

If $g \subseteq \mathcal{N}$ of size $\alpha n$ is perfectly classified by $h$, then the above inequality gives
\begin{align*}
    \sum_{i \in g} \frac{1}{u_i(h_{\PF})} \leq n,
\end{align*}
which in turn implies that
\begin{align*}
    \frac{|g|}{\sum_{i \in g} \frac{1}{u_i(h_{\PF})}} \geq  \frac{|g|}{n} = \alpha.
\end{align*}
Since the arithmetic mean is at least the harmonic mean, we get
\[
    u_{g}(h_{\PF}) =  \frac{1}{|g|} \sum_{i \in g} u_i(h_{\PF}) \ge \alpha. \qedhere
\]
\end{proof}

Recall \cref{cor:pf}:
\pfcor*
\begin{proof}
For any $T \subseteq g$, we have:
\begin{align*}
    u_g(h_{\PF}) \ge u_T(h_{\PF}) \frac{|T|}{|g|}.
\end{align*}
Let $T$ be the largest subset of $g$ that is perfectly classifiable by $h_j$. By \cref{thm:pf}, $u_T(h_{\PF}) \ge \alpha \frac{|T|}{|g|}$. Then, we get
\begin{align*}
    u_g(h_{\PF}) \ge u_T(h_{\PF}) \frac{|T|}{|g|} \ge \alpha \left(\frac{|T|}{|g|}\right)^2.
\end{align*}
Since $\frac{|T|}{|g|} = u_g(h^*_g)$, the above inequality turns into
\[
    u_g(h_{\PF}) \ge \alpha \left[u_g(h^*_g)\right]^2. \qedhere
\]
\end{proof}

\section{Omitted examples}\label{app:examples}
\setcounter{example}{1}
In the following example, we see that there are instances where, for some specific $\mathcal{H}$, the claim of \cref{thm:lowerbound} does not hold. 
\begin{example}\label{ex:lowerbound}
Suppose there are three data points $\{1,2,3\}$, and there are four classifers $h_1, \bar{h_1}, h_2 \mbox{ and } \bar{h_2}$ in $\mathcal{H}$. Classifier $h_1$ classifies $\{1,2\}$ correctly, and $h_2$ classifies $\{2,3\}$ correctly. If a randomized classifier $h$ picks $h_1$ and $h_2$ with the same probability $1/2$, then for all subsets $S$ which are perfectly classifiable, i.e., for each of $\{1\}$, $\{2\}$, $\{3\}$, $\{1,2\}$ and $\{2,3\}$, the utility $u_S(h)$ is $0.5$, $1$, $0.5$, $0.75$ and $0.75$, respectively. Each of these utilities is greater than the fractional size of the subsets, which does not agree with the claim of \cref{thm:lowerbound}. 
\end{example}

\section{Computing the {\sc \texttt{Proportional Fairness}} classifier}\label{app:computepf}
In this section, we describe how to compute our {\sc \texttt{Proportional Fairness}} ({\PF}) classifier.
We present computational results for two different settings.
Lemma~\ref{lem:pf-comp-finite} states that when the set of deterministic classifiers are given explicitly, we can compute the {\PF} classifiers in polynomial time.
Lemma~\ref{lem:pf-comp-inf} focuses on the case where there are exponentially or infinitely many deterministic classifiers, and shows that the {\PF} classifier can still be computed in polynomial time, assuming that we have black-box access to an agnostic learning oracle.

Recall that $\NN$ is the set of data-points with $|\NN| = n$, $\mathcal{H}$ is the set of (deterministic) classifiers with $|\HH| = m$, and $u_i(h) = 1$ if the classifier $h$ labels the $i$-th data-point correctly, and $u_i(h) = 0$ otherwise.

The {\PF} classifier is a distribution $(p_j)_{j: h_j \in \mathcal{H}}$ over deterministic classifiers. {\PF} corresponds to the optimal solution to the following mathematical program:
\begin{equation}
\begin{array}{ll}
\text{maximize}   & \sum_{i \in \NN} \ln v_i \\
\text{subject to} & v_i \le \sum_{{h_j} \in \HH} p_j u_i(h_j), \quad \forall i \in \NN \\
                  & \sum_{h_j \in \HH} p_j \le 1\\
                  & p_j \ge 0, \quad \forall h_j \in \HH
\end{array}
\label{eqn:pf}
\end{equation}
where the variables $p_j$ describes the randomized classifier (which chooses $h_j$ with probability $p_j$); and $0 \le v_i \le 1$ is the utility of the $i$-th data-point under the distribution $p$ (i.e., the probability that the $i$-th data-point is classified correctly).
Observe that the objective function is monotone in every $v_i$, so at optimality, we always have $\sum_{h_j \in \HH} p_j = 1$, and $v_i = \sum_{{h_j} \in \HH} p_j u_i(h_j)$ for every $i \in \NN$.

\begin{lemma}
\label{lem:pf-comp-finite}
Given a classification instance $(\NN, \HH)$ with $n = |\NN|$ data-points and $m = |\HH|$ classifiers, the mathematical program~\eqref{eqn:pf} can be solved to precision $\eps > 0$ in time $\poly(n, m, \log(1/\eps))$.
\end{lemma}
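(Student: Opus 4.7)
Program~\eqref{eqn:pf} is a convex program: the objective $\sum_{i \in \NN} \ln v_i$ is a sum of concave functions of $v_i$, and all the remaining constraints are linear in $(p, v) \in \mathbb{R}^{m+n}$, so in principle the statement should fall under standard results on polynomial-time convex optimization. I will present it that way, with the two ingredients being (i) a preprocessing step guaranteeing strict interior feasibility, and (ii) an a priori polynomial bound on how close $v_i$ can be to $0$ at a near-optimal solution, so that the logarithmic singularity at $v_i=0$ is kept at bay.

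First, preprocess $\NN$ by discarding every $i$ such that $u_i(h_j)=0$ for all $h_j \in \HH$: such a point has $v_i = 0$ under every feasible distribution and contributes $-\infty$ to the objective, so it must be excluded from the sum (or, equivalently, treated separately and removed). After this step, for every remaining $i \in \NN$ there is some $h_j$ with $u_i(h_j)=1$, so the uniform distribution $p_j = 1/m$ is a strictly interior feasible point with $v_i \ge 1/m$ and objective value at least $-n\ln m$. Now the optimum value $\OPT$ of \eqref{eqn:pf} satisfies $\OPT \ge -n \ln m$. If any coordinate $v_i$ of a feasible solution were smaller than $\exp(-cn\ln m)$ for a sufficiently large constant $c$, then the single term $\ln v_i$ would already drive the whole sum below $-n\ln m - 1$, which is strictly below $\OPT$. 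Hence every solution within $1$ of $\OPT$ — and in particular the optimum itself — satisfies $v_i \ge \exp(-\poly(n,m))$ for all $i$.

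With this bound, restrict the search to the compact convex set $K = \{(p,v) : p_j \ge 0,\ \sum_j p_j \le 1,\ v_i \ge \exp(-\poly(n,m)),\ v_i \le \sum_j p_j u_i(h_j)\}$. On $K$ the objective is smooth and has polynomially bounded Lipschitz constant in the sense needed by the ellipsoid method (its logarithm is polynomial in $n$ and $m$), subgradients are trivially computable, and the $O(n+m)$ linear constraints admit an efficient separation oracle. Applying the ellipsoid method for convex optimization over $K$ then yields an $\eps$-additively optimal feasible point in $\poly(n,m,\log(1/\eps))$ time. Alternatively, one can use a short-step interior-point method with $-\sum_i \ln v_i$ acting as (part of) a self-concordant barrier for the $v_i \ge 0$ constraints, which gives the same asymptotic complexity and is the route usually taken for Eisenberg--Gale-type programs.

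The main obstacle is precisely item (ii): without the explicit lower bound $v_i \ge \exp(-\poly(n,m))$, the objective is not Lipschitz on the feasible polytope and the off-the-shelf polynomial-time guarantees of convex optimization do not directly apply. Once the uniform-distribution baseline is used to establish this bound, the remainder is a routine invocation of standard convex programming machinery and need not be spelled out in detail.
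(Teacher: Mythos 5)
Your proof takes essentially the same route as the paper's: both observe that \eqref{eqn:pf} is a convex program in the $n+m$ variables $(v,p)$ with an explicitly given list of linear constraints, and solve it with the ellipsoid method in $\poly(n, m, \log(1/\eps))$ time. The extra step you supply---using the uniform distribution over $\HH$ to get $\OPT \ge -n\ln m$, concluding that near-optimal solutions satisfy $v_i \ge \exp(-\poly(n,m))$, and truncating the feasible region so that the logarithmic objective has a Lipschitz constant of polynomial bit-length---is a legitimate refinement that the paper's one-line proof elides, and it is exactly the right way to justify applying off-the-shelf ellipsoid guarantees to an objective that is unbounded near the boundary $v_i = 0$.
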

\begin{proof}
When $|\NN| = n$ and $|\HH| = m$, the mathematical program~\eqref{eqn:pf} has $n + m$ variables.
The feasible region is given explicitly by a set of $n+1$ linear constraints.
Because the objective function is concave in the variables $(v, p)$, we can minimize it using the ellipsoid method~\citep{khachiyan1979polynomial,GLS1988} in time $\poly(n, m, \log(1/\eps))$.
\end{proof}

In many applications, we often have infinitely many hypotheses in $\HH$ (e.g., all hyperplanes in $\mathbb{R}^d$).
If this is the case, the mathematical program~\eqref{eqn:pf} has infinitely many variables, and to solve it, we need to make some assumptions on the structure of $\HH$.

A commonly used assumption is that there exists an agnostic learning oracle: given a set of weights on the data-points, the oracle returns an optimal classifier $h \in \HH$ subject to these weights.
Formally, we assume black-box access to an oracle for the following problem:

\begin{definition}[Agnostic Learning]
Fix a set of data-points $\NN$ and a family of classifiers $\HH$.
Given any weights $(w_i)_{i \in \NN}$, find a classifier $h \in \HH$ that maximizes the (weighted) average accuracy on $\NN$.
That is, $h$ maximizes $\sum_{i \in \NN} w_i u_i(h)$.
\end{definition}

Given such an oracle, there are a few ways in which Problem~\eqref{eqn:pf} can be solved theoretically in polynomial time. We first sketch the outline of a \emph{multiplicative weights} approach: Consider the feasibility version of the problem, where we want to check if there is a feasible solution that gives us an objective value of at least $U^*$. If we define a region $P$ as the set of $v, p$, where $v = \{v_i\}_{i \in \mathcal{N}}$ and $p = \{p_j\}_{j: h_j \in \mathcal{H}}$, that satisfy: 
\begin{align}
    \sum_{i \in \mathcal{N}} \ln v_i &\ge U^*, \mbox{ and }  v_i \ge 0 \quad \forall i \in \mathcal{N}, \\
    \sum_{j: h_j \in \mathcal{H}} p_j &= 1, \mbox{ and } p_j \ge 0 \quad \forall j:h_j \in \mathcal{H},
\end{align}
then the problem can be restated as:
\begin{align*}
    \exists ? (v,p) \in P \mbox{ such that } \sum_{j: h_j \in \mathcal{H}}p_j u_i(h_j) \ge v_i, \quad \forall i \in \mathcal{N}.
\end{align*}
This can be solved via the multiplicative weights if we have an efficient oracle for solving the following optimization problem for a given $y \ge 0$ \citep{bhalgat2013optimal, arora2012multiplicative}:
\begin{align*}
    \mbox{maximize} \quad  &\sum_{i \in \mathcal{N}} y_i\left( \sum_{j: h_j \in \mathcal{H}}p_j u_i(h_j) - v_i \right) \\
    \mbox{subject to} \quad & (v,p) \in P.
\end{align*}
It can be seen that the above devolves into two decoupled problems: The first one is for $v$, which can be solved analytically,
\begin{align*}
    \mbox{minimize} &\sum_{i \in \mathcal{N}} y_i v_i \\
    \mbox{subject to } & \sum_{i \in \mathcal{N}} \ln(v_i) \ge U^*\\
    & v \ge 0,
\end{align*}
and the second for $p$, which can be solved with access to an agnostic learning oracle:
\begin{align*}
    \mbox{minimize} &\sum_{i \in \mathcal{N}} \sum_{j : h_j \in \mathcal{H}} p_j y_i u_i(h_j) \\
    \mbox{subject to } & \sum_{j: h_j \in \mathcal{H}} p_j = 1\\
    & p \ge 0.
\end{align*}

As the next lemma shows, Problem \eqref{eqn:pf} can also be theoretically solved in a more straightforward way using the ellipsoid method.
\begin{lemma}
\label{lem:pf-comp-inf}
Given a classification instance $(\NN, \HH)$ with $n = |\NN|$ data-points and an agnostic learning oracle for the set of classifiers $\HH$, the mathematical program~\eqref{eqn:pf} can be solved to precision $\eps > 0$ in time $\poly(n, \log(1/\eps))$.
\end{lemma}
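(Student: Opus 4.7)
The strategy is to apply the ellipsoid method to the Lagrangian dual of~\eqref{eqn:pf}, using the agnostic learning oracle as the separation oracle.

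First, I would form the dual. Introducing multipliers $y_i \ge 0$ for the constraints $v_i \le \sum_j p_j u_i(h_j)$ and $\mu \ge 0$ for $\sum_j p_j \le 1$, then optimizing out $v_i$ (optimal at $v_i = 1/y_i$) and $p_j$ (which forces $\sum_i y_i u_i(h_j) \le \mu$ for every $h_j \in \HH$), the dual reduces to the $(n+1)$-variable convex program
\begin{align*}
\min_{y > 0,\, \mu \ge 0}\ & \mu - \sum_{i \in \NN}\ln y_i - n \\
\text{s.t.}\ & \sum_{i \in \NN} y_i\, u_i(h) \le \mu \quad \forall h \in \HH.
\end{align*}

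Second, I would argue the dual optimum lies in a polynomial-size box, so that the ellipsoid method applies. The bound $v_i \le 1$ forces $y_i^\star \ge 1$; the guarantee $v_i^\star \ge 1/n$ from Theorem~\ref{thm:pf} (applied to each singleton $\{i\}$, which is perfectly classified by either some $h_0$ or its complement $\bar{h}_0$) yields $y_i^\star \le n$; hence $\mu^\star \le \max_h \sum_i y_i^\star u_i(h) \le n^2$. Restricting to this box preserves the optimum. A separation oracle for the constraint set is a single agnostic-learning query: for a candidate $(y, \mu)$, call the oracle with weights $w_i = y_i$ to obtain $h^\star \in \arg\max_{h \in \HH}\sum_i y_i u_i(h)$; the point is feasible iff $\sum_i y_i u_i(h^\star) \le \mu$, and otherwise the inequality associated with $h^\star$ supplies a separating hyperplane. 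Standard ellipsoid analysis for convex minimization (Gr\"otschel--Lov\'asz--Schrijver) then returns an $\eps$-optimal dual solution in $\poly(n, \log(1/\eps))$ iterations, each costing one oracle call and $\poly(n)$ arithmetic.

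Third, to recover the primal, I would collect the set $S$ of classifiers returned by the oracle across all ellipsoid iterations; $|S| = \poly(n, \log(1/\eps))$. Every separating cut generated during the run was produced by some $h \in S$, so the dual problem with $\HH$ replaced by $S$ has the same $\eps$-optimum. By strong duality applied to this finite-dimensional restricted pair, the restricted primal~\eqref{eqn:pf} over $\{p_j : h_j \in S\}$ attains an objective within $\eps$ of the true {\PF} value, and is solvable in polynomial time by Lemma~\ref{lem:pf-comp-finite}.

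The main obstacle I would expect is this recovery step: verifying rigorously that the polynomially-many classifiers enumerated during the ellipsoid run really do form a sufficient basis for a near-optimal primal. This is the classical column-generation-via-ellipsoid argument of Gr\"otschel--Lov\'asz--Schrijver, but it must be transferred carefully from LPs to the strictly concave primal~\eqref{eqn:pf} --- most cleanly by observing that the restricted and unrestricted dual optima coincide up to $\eps$, then invoking strong duality in the finite restricted space to recover a primal $\eps$-optimum.
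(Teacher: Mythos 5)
Your proposal is correct and follows essentially the same route as the paper: dualize to an $(n+1)$-variable convex program, run the ellipsoid method using the agnostic learning oracle as the separation oracle, collect the polynomially many classifiers queried, and solve the restricted finite-dimensional primal via Lemma~\ref{lem:pf-comp-finite}. The only differences are presentational --- the paper uses the ``ellipsoid against hope'' feasibility variant (adding the constraint that the dual objective be at most $\OPT - \eps$ and deriving infeasibility) rather than direct convex minimization, and your explicit box bounds $1 \le y_i^\star \le n$ on the dual variables are a detail the paper omits but which is needed to initialize the ellipsoid.
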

\begin{proof}
We use the ``Ellipsoid Against Hope'' algorithm proposed in~\citet{papadimitriou2008computing}.

Consider the dual of~\eqref{eqn:pf}:
\begin{equation}
\begin{array}{ll}
\text{minimize}   & -\left(\sum_{i \in \NN} \ln w_i\right) - n + z \\
\text{subject to} & \sum_i w_i u_i(h_j) \le z, \quad \forall h_j \in \HH \\
                  & w_i \ge 0, \quad \forall i \in \NN. 
\end{array}
\label{eqn:pf-dual}
\end{equation}
The dual program~\eqref{eqn:pf-dual} has $n+1$ variables $(w, z)$ and infinitely many constraints.
Strong duality holds despite the infinite-dimensionality of~\eqref{eqn:pf}.

Let $\OPT$ denote the optimal value of the primal and dual programs.
The dual objective is convex in $(w, z)$, so we can add a constraint
\begin{equation}
-\left(\sum_{i \in \NN} \ln w_i\right) - n + z \le \OPT - \eps
\label{eqn:ellipsoid-against}
\end{equation}
and use ellipsoid method to check the feasibility of the dual with this additional constraint.
We show a separation oracle exists so we can run the ellipsoid method.
For a fixed point $(w, z)$, we can verify Constraint~\eqref{eqn:ellipsoid-against} directly, and we can use the agnostic learning oracle to check the infinitely many constraints
\begin{equation}
\sum_i w_i u_i(h_j) \le z, \quad \forall h_j \in \HH
\label{eqn:pf-dual-hj-con}
\end{equation}
because it is sufficient to first use the oracle to find some $h_j$ that maximizes $\sum_i w_i u_i(h_j)$, and then check only the $j$-th constraint.

We know the ellipsoid method must conclude infeasibility, because the minimum possible value of the dual is $\OPT$ but we are asking for $\OPT - \eps$.
Let $\HH'$ denote the set of classifiers whose corresponding constraints~\eqref{eqn:pf-dual-hj-con} are checked in the execution of the ellipsoid method.
Because the ellipsoid only examines the constraints in $\HH'$ and concludes infeasibility, we know that if we replace $\HH$ with $\HH'$ in the dual program~\eqref{eqn:pf-dual}, the objective value is still larger than $\OPT - \eps$.
Moreover, the cardinality of $\HH'$ is at most $\poly(n, \log(1/\eps))$ because the ellipsoid method terminates in $\poly(n, \log(1/\eps))$ steps.

Consequently, we can replace $\HH$ with $\HH'$ in the primal convex program~\eqref{eqn:pf} to make it finite-dimensional, and invoke Lemma~\ref{lem:pf-comp-finite} to solve it where $m = |\HH'| = \poly(n, \log(1/\eps))$.
Therefore, the overall running time is $\poly(n, \log(1/\eps))$.
\end{proof}

It is worth noting that the proof of Lemma~\ref{lem:pf-comp-inf} continues to hold even if we only have an $\eps$-approximately optimal agnostic learning oracle.

\section{A heuristic for {\PF}} \label{app:pf_heuristic}
Here we describe a heuristic for {\PF} drawing inspiration from some literature on social choice (voting). Imagine a social choice setting: we have a set $N$ of voters and a set $C$ of candidates. Each voter $i \in N$ has a subset $A_i \subseteq C$ of candidates which she prefers. The goal here is to select a committee of a fixed size $k$, i.e., a subset $W \subseteq C$ ($|W| = k$), which ``satisfies'' the voters as much as possible 

One method to do so is Proportional Approval Voting \citep{aziz2017justified}, or PAV for short. Here, a voter is assumed (for the sake of computation) to derive a utility of $1 + \frac12 + \ldots + \frac1j$ from a committee $W$ that contains exactly $j$ of her
approved candidates, i.e., $|A_i \cap W| = j$. For PAV, as a method of computing a committee $W$, the overall goal is to maximize the sum of the voters’ utilities -- in other words, PAV outputs a set $W^* = \arg \max_{W \subseteq C:|W|=k} \sum_{i \in N} |A_i \cap W|$.

Reweighted Approval Voting (RAV for short) converts PAV into a multi-round rule as follows: Start by setting $W = \varnothing$. Then in round $j$ ($j = 1, \ldots, k$), select (without replacement) a candidate $c$ which maximizes $\sum_{i \in N: c \in A_i} \frac{1}{1 + |W \cap A_i|}$, and adds it to $W$. Finally, it outputs the set $W$, after $k$ rounds. i.e., having chosen $k$ candidates. RAV is also sometimes referred to as ``Sequential PAV” \citep{brams2014satisfaction}. 

In our case, for {\PF}, we want to maximize $f(h) := \sum_{i \in \mathcal{N}} \ln u_i(h)$. Since $\ln(t) \approx \sum_{i=1}^t \frac1i$, we can think of PAV as a close enough proxy (where the voters are the data points, and the candidates are all the available classifiers). In the same token, we can potentially apply RAV to our problem (assuming black box access to an agnostic learning oracle). In fact, in practice, we find that a slight modification to RAV works better. We describe this in terms of our problem setting and notation in \cref{alg:pf_heuristic}. 

\begin{algorithm}[ht!]
   \caption{{\PF} Heuristic Classifier}
   \label{alg:pf_heuristic}
\begin{algorithmic}
    \STATE {\bfseries Input:} data set $\mathcal{N}$, family of classifiers  $\mathcal{H}$, number of iterations $R$
    \STATE Initialize: $r \gets 1$, $(c_i)_{i \in \mathcal{N}} \gets \mathbf{0}$, $(w_i)_{i \in \{1,\ldots,R\}} \gets \mathbf{0}$
    \WHILE{$r \leq R$}
        \STATE $h^*_r \gets \mbox{argmax}_{h_j \in \mathcal{H}} \sum_{i \in \mathcal{N}} \frac{1}{1 + c_i} \cdot u_i(h_j)$
        \STATE $\mathcal{T}_r \gets \{i \in \mathcal{N} \mid u_i(h^*_r) = 1\}$
        \STATE $w_r \gets \sum_{i \in \mathcal{T}_r} \frac{1}{1 + c_i}$
        \FORALL{$i \in \mathcal{T}_r$}
            \STATE $c_i \gets c_i + 1$
        \ENDFOR
        \STATE $r \gets r+1$
    \ENDWHILE
    
    \STATE {\bfseries Return:} classifier $h_{\PF}$ that chooses $h_t^* \in \mathcal{H}$ with probability $p_t \propto w_t$ for $t = 1,2,\ldots,R$.
\end{algorithmic}
\end{algorithm}

Note that a direct analog of RAV would be choosing $h^*_t$ with probability just $\frac1R$, i.e., uniform over all $r \in \{1,\ldots,R\}$. Also note that any implementation of RAV would not have black-box access to an agnostic learning oracle. And when data points are reweighted each time, standard out-of-the-box training methods apply the weights to the respective gradient updates. Since scaling the gradient updates affects the convergence of the training procedure, some kind of rescaling is necessary. In practice, we find that reweighting the probabilities to be proportional to $w_t$ (see \cref{alg:pf_heuristic}) works well. For example, doing so helps us to always beat the lower bound (which does not happen otherwise). For details about the exact implementation, please refer to the attached code.

\section{A Greedy approximation of {\PF}}

The next question to consider is whether there is a simpler classifier that achieves similar guarantees to those given by the {\PF} classifier. We answer this question in the affirmative by presenting the iterative {\Greedy} algorithm (\cref{alg:greedy}): select the classifier that classifies the most number of data points correctly, allocate it a weight proportional to this number, discard the data points that it classifies correctly, and simply repeat the above procedure until there are no data points left. The randomized classifier {\Greedy} is defined by giving to each of these classifiers a probability proportional to its weight, i.e., the number of data points classified correctly in the corresponding iteration.

 We now show that the {\Greedy} algorithm provides a constant-factor approximation to the guarantee provided by the {\PF} classifier. Note that the first step in the \emph{while} loop of {\Greedy} (Algorithm \ref{alg:greedy}) involves using the {\ERM} agnostic learning black box. 
\begin{restatable}{theorem}{greedy}
\label{thm:greedy}
If $h_{\G}$ is the {\Greedy} classifier, then for any subset $\mathcal{S} \subseteq \mathcal{N}$ that admits a perfect classifier $h \in \mathcal{H}$, we have $u_S(h_{\G}) \ge \frac{\alpha}{2} + \frac{1}{2 \alpha} \left(\max\left(0, 2\alpha - 1 \right) \right)^2  \ge \frac{\alpha}{2}$, where $\frac{|S|}{n} = \alpha$.
\end{restatable}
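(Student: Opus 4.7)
The plan is to analyze the execution of $\Greedy$ iteration by iteration and lower-bound the accuracy of $h_{\G}$ on $S$ by a concrete sum. Let $w_t$ be the number of points that $h_t$ (the classifier chosen at iteration $t$) removes, $a_t$ the number of those lying in $S$, and $s_t = \alpha n - \sum_{t' < t} a_{t'}$ the $S$-points still present at the start of iteration $t$. Closure of $\mathcal{H}$ under complements guarantees that for every data point some hypothesis correctly classifies it, so $\Greedy$ terminates having removed all of $\mathcal{N}$; hence $\sum_t w_t = n$ and $\sum_t a_t = |S| = \alpha n$. The key greedy property I would invoke is that, since $h^*_S$ correctly classifies all $s_t$ surviving $S$-points, $h_t$ must do at least as well overall: $w_t \ge s_t$. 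Because each $i \in S$ is classified correctly by (at least) the $h_{t_i}$ that first removed it, and $h_{\G}$ chooses $h_t$ with probability $w_t/n$,
\[
u_S(h_{\G}) \ \ge\ \frac{1}{|S|}\sum_{i \in S}\frac{w_{t_i}}{n} \ =\ \frac{\sum_t a_t w_t}{\alpha n^2}.
\]

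Next, I would combine $w_t \ge s_t$ with an exact summation-by-parts identity. Setting $X_t = \sum_{t' \le t} a_{t'}$, the elementary identity $a_t X_{t-1} = \tfrac{1}{2}(X_t^2 - X_{t-1}^2) - \tfrac{1}{2} a_t^2$ telescopes (using $X_0 = 0$, $X_T = \alpha n$) to $\sum_t a_t X_{t-1} = \tfrac{1}{2}\alpha^2 n^2 - \tfrac{1}{2}\sum_t a_t^2$, so that
\[
\sum_t a_t w_t \ \ge\ \sum_t a_t s_t \ =\ \alpha n \sum_t a_t - \sum_t a_t X_{t-1} \ =\ \tfrac{\alpha^2 n^2}{2} + \tfrac{1}{2}\sum_t a_t^2.
\]
Dropping the $\sum_t a_t^2$ residual already gives the weaker bound $u_S(h_{\G}) \ge \alpha/2$.

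Finally, to extract the $\max(0,2\alpha-1)^2/(2\alpha)$ correction, I would exploit the first iteration: $h_1$ correctly labels $w_1 \ge s_1 = \alpha n$ points, and at most $(1-\alpha)n$ of them can lie outside $S$, so $a_1 \ge \max(0,\, w_1 - (1-\alpha)n) \ge \max(0,\, (2\alpha-1)n)$. This gives $\sum_t a_t^2 \ge a_1^2 \ge \max(0, 2\alpha - 1)^2 n^2$, and substituting back yields
\[
u_S(h_{\G}) \ \ge\ \frac{\alpha^2 n^2/2 + \max(0, 2\alpha - 1)^2 n^2/2}{\alpha n^2} \ =\ \frac{\alpha}{2} + \frac{\max(0, 2\alpha - 1)^2}{2\alpha},
\]
as required. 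The step I expect to be the main obstacle is keeping the summation-by-parts identity \emph{exact} rather than merely an inequality: the naive telescoping bound $\sum_t a_t X_{t-1} \le \tfrac{1}{2}X_T^2$ discards the $\tfrac{1}{2}\sum_t a_t^2$ residual, and without that residual there is nothing to absorb the first-iteration lower bound on $a_1$, leaving one stuck at the weaker $\alpha/2$ guarantee.
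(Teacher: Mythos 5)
Your proof is correct and follows essentially the same route as the paper's: the same greedy property ($w_t \ge s_t$, from the fact that the perfect classifier for $S$ is always available as a candidate), the same lower bound $u_S(h_{\G}) \ge \sum_t a_t w_t/(\alpha n^2)$, and the same first-iteration counting argument giving $a_1 \ge \max(0,2\alpha-1)\,n$. The only difference is bookkeeping: the paper special-cases the first iteration and bounds the remaining ones by the Riemann sum $\int_0^b (b-x)\,\mathrm{d}x$, whereas you run an exact summation by parts over all iterations and retain the residual $\tfrac12\sum_t a_t^2$ before discarding all but its first term --- an identical computation in slightly cleaner, unified form.
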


\begin{proof}
Consider any set $\mathcal{S}$ of size $\alpha n$ (where $n = |\mathcal{N}|$) that admits a perfect classifier $h_\mathcal{S}$. Let $h_1^*$ be the classifier found at the first step of {\Greedy} (Algorithm \ref{alg:greedy}). Suppose it classifies $a n$ points in $\mathcal{S}$ correctly, $b n$ points in $S$ incorrectly, and $c n$ points not in $\mathcal{S}$ correctly. Clearly, $a + b = \alpha$, and $c \le 1 - \alpha$. Further, the greedy choice implies $b \le c$, so that $b \le 1 - \alpha$. Therefore, $a = \alpha - b \ge \max(2 \alpha - 1, 0)$.

Since {\Greedy} assigns $p_1 = a + d$ and classifies $a n$ points in $S$ correctly, the total utility generated on $\mathcal{S}$ is $a (a+c) n \ge a (a+b)n$.

At the second step, $|T_2| \ge n b$, since $h_S$ classifies $b n $ points correctly. Suppose $|T_2 \cap \mathcal{S}| = y_1 n$. Similarly, $|T_3| \ge (b-y_1) n$; let $|T_3 \cap \mathcal{S}| = y_2 n$, and so on. Therefore, the total utility generated by {\Greedy} is at least
\begin{align*}
    n \cdot \left( a (a+b) + \sum_{q \ge 1} (b - z_{q-1}) y_q \right), 
\end{align*}
where $z_q = \sum_{t = 1}^q y_t$ and $\sum_{q \ge 1} y_q = b$.

Now focus on the term $\sum_{q \ge 1} (b - z_{q-1}) y_q$.
\begin{align*}
    \sum_{q \ge 1} (b - z_{q-1}) y_q &= \sum_{q \ge 1} (b - y_{q-1}) (z_q - z_{q-1}) \\
     &\ge \sum_{q \ge 1} (b - z_{q-1}) (z_q - z_{q-1}) \\
     &\ge \int_{0}^b (b-x) \mathrm{d} x.
\end{align*}

Therefore,
\begin{align}
    &n \cdot \left( a (a+b) + \sum_{q \ge 1} (b - z_{q-1}) y_q \right) \nonumber\\
    &\geq n \cdot \left( a (a+b) + \int_{0}^b x \mathrm{d} x \right), \label{eq:greedybound}
\end{align}
which on further simplification yields
\begin{align*}
    n \cdot \left( a (a+b) + \frac{b^2}{2} \right) = n \cdot \left( \frac{a^2}{2} + \frac{(a+b)^2}{2} \right).
\end{align*}
Now, since $|S| = \alpha n$, $a+b = \alpha$, and $a  \ge \max(2 \alpha - 1, 0)$,
\begin{align*}
    u_S(h_\textsc{G}) &\ge \frac{1}{\alpha} \left( \frac{(a+b)^2}{2}  +  \frac{a^2}{2} \right) \\
    &\ge  \frac{\alpha}{2} +  \frac{1}{2 \alpha} \left(\max\left(0, 2\alpha - 1 \right) \right)^2. \qedhere
\end{align*}
\end{proof}

\begin{figure}
\begin{minipage}{1\textwidth}
\begin{algorithm}[H]
   \caption{{\Greedy} Classifier}
   \label{alg:greedy}
\begin{algorithmic}
    \STATE {\bfseries Input:} data set $\mathcal{N}$, family of classifiers  $\mathcal{H}$
    \STATE Initialize: $r \gets 1$, $\mathcal{S}_r \gets \mathcal{N}$
    \WHILE{$|\mathcal{S}_r| > 0$}
        \STATE $h^*_r \gets \mbox{argmax}_{h_j} u_{\mathcal{S}_r}(h_j)$
        \STATE $\mathcal{T}_r \gets \{i \in \mathcal{S}_r \mid u_i(h^*_r) = 1 \}$
        \STATE $p_r \gets \frac{|\mathcal{T}_r|}{n}$
        \STATE $\mathcal{S}_{r+1} \gets \mathcal{S}_r \setminus \mathcal{T}_r$
        \STATE $r \gets r+1$
    \ENDWHILE
    
    \STATE {\bfseries Return:} classifier $h_{\G}$ that chooses $h_s^* \in \mathcal{H}$ with probability $p_s$ for $s = 1,2,\ldots,r-1$
\end{algorithmic}
\end{algorithm}
\end{minipage}
\end{figure}  

Therefore, for any set $\mathcal{S}$ that has a perfect classifier, the average utility of {\Greedy} is at least half the average utility of Proportional Fairness; for large sets, the approximation factor is better, and approaches $1$ as $\alpha \rightarrow 1$. We complement our $\frac{\alpha}{2}$ analysis with the example below which shows that the above bound is tight when $\alpha \to 0$.

\begin{example}\label{ex:greedy}
 Let $\mathcal{N}$ consist of the following $n = \frac{k(k+1)}{2}$ data-points:
\[
\begin{matrix}
(1, 1), & (1, 2), & \ldots, & (1, k - 1), & (1, k),\\
(2, 1), & (2, 2), & \ldots, & (2, k - 1), &\\
\ldots, & \ldots, & \ldots, &\\
(k - 1, 1), & (k - 1, 2), &\\
(k, 1). &
\end{matrix}
\]
There are $k + 1$ classifiers:
\begin{enumerate}
    \item $h_1$ correctly classifies every $(1, i)$ for $i \in [k]$.
    \item The classifier $h_j$ ($j = 2, 3, \ldots, k$) correctly classifies every $(j, i)$ for $i \in [k - j + 1]$ as well as $(1, j)$.
    \item The classifier $h_{k+1}$ only correctly classifies $(1, 1)$.
\end{enumerate}

Let $\mathcal{S} = \{(1, i) \mid i \in [k]\}$, i.e., the set which $h_1$ correctly classifies. In the $j$-th round, {\Greedy} can pick $h_{j + 1}$ since it covers $k - j + 1$ new data-points (tied with $h_1$). However, each of $h_2, h_3, \ldots, h_{k+1}$ only covers one data-point in $S$, meaning that the accuracy of {\Greedy} on $S$ is only $\frac{1}{k} = \frac{\alpha}{2} \cdot \frac{k + 1}{k}$, where $\alpha = \frac{|\mathcal{S}|}{|\mathcal{N}|}$.
\end{example}

\section{Generalization bound for {\PF}}
In this section, we will outline how to derive generalization bounds for {\PF}. To begin with, assume that $\mathcal{H}$ is a hypothesis space of finite VC dimension, say $d$. Let $\Delta(\mathcal{H})$ be the space of randomized classifiers. Let $D$ represent the distribution of data.

For some $h \in \mathcal{H}$, as before, let $u_i(h)$ be the utility of a point $i$ from classifier $h$. Let $S_i(h) = \log(u_i(h) + \eps)$, where $\eps > 0$. Let $A_{\Delta(\mathcal{H})}$ be the set of (expected) outcomes for $i$ in a finite sample $S$ of size $m$ based on the function $S_i(\cdot)$ and the space $\Delta(\mathcal{H})$. From standard generalization bounds (Theorem 26.3 in \citet{shalev2014understanding}) based on Rademacher complexity, we have a bound of $\frac{2R}{\delta}$ with probability $1-\delta$ over the choice of $S$ (drawn randomly from $D$), where $R$ is the Rademacher complexity of $A_{\Delta(\mathcal{H})}$.

We now show how to bound $R$. First note that in the domain $[\eps, +\infty)$, $\log$ is $\frac{1}{\eps}$-Lipschitz. By Lemma 26.9 in \citet{shalev2014understanding}, we need only bound the Rademacher complexity corresponding to $u_i(h)$ for $h \in \Delta(\mathcal{H})$, and $R$ will be at most factor $\frac{1}{\eps}$ of this bound. Moreover, by Lemma 26.7 of \citet{shalev2014understanding}, the Rademacher complexity is unaffected by taking the convex hull of a set, and therefore, we need only consider $h \in \mathcal{H}$. Since $\mathcal{H}$ is of finite VC dimension $d$, we can bound the resulting Rademacher complexity by $O(\sqrt{d/m})$ \citep{bartlett2002rademacher}. 

For example, if $\mathcal{H}$ consisted of linear classifiers whose coefficients have a bounded $\ell_2$ norm of $1$, and the features $x$ are similarly bounded, then the $R$ is $O(\frac{1}{\eps \sqrt{m}})$. Therefore, the generalization error is bounded by $O(\frac{1}{\delta \eps \sqrt{m}})$. And if this is at most $\eps$, then we get a $2\eps$-approximation of the optimal {\PF} solution, i.e., for $m = O(\frac{1}{\eps^4 \delta^2})$.



\section{Omitted details from the experiments}\label{app:experiments}
In all our experiments, we drop sensitive features such as race and gender. For all the methods in the paper, whenever possible, we use standard implementations from the \texttt{scikit-learn} (0.21.3) package \citep{scikit-learn}. Wherever we need an agnostic learning black-box, we just use Logistic Regression, since it performs reasonably well on the data-sets explored in this paper. For the \texttt{adult} data set, we use the train data and test data as available at \url{https://archive.ics.uci.edu/ml/datasets/Adult}. For the \texttt{compas} data set, we do a train-test split of 80-20. We use the same features as discussed in \url{https://github.com/propublica/compas-analysis}. All code to compute the various classifiers is attached herewith.

Below, we provide some additional plots.

\subsection{Additional plots for the \texttt{compas} data-set}
We look at subsets that are mis-classified to varying degrees by {\LR} and {\Ada}, and check how well {\PF} and {\Greedy} perform on them. More precisely, for each $x \in \{0,0.1,\ldots, 1\}$, we sample 25\% of the test set such that a fraction $x$ of it comes from the points in the test set that are classified correctly by {\LR} and {\Ada} respectively. For example, if $x = 0.2$, then one-fifth of the sampled subset comes from the correctly classified points, and four-fifths from the incorrectly classified. \Cref{fig:compas_pf-vs-erm_25} shows the accuracy obtained by the different methods averaged across 100 samples of subsets obtained (as described above) based on {\LR} and {\Ada}. We see that the performance of \emph{both {\PF} and {\Greedy} is similar irrespective of whether the subsets are defined based on {\LR} or {\Ada}}. Similar trends are seen when other methods are used instead of {\LR} or {\Ada}.

In \Cref{fig:compas_pf-vs-erm_75}, we repeat the procedure used in \Cref{fig:compas_pf-vs-erm_25}, but this time the sampled subset is $75\%$ of the test set. More precisely, for each $x$, we sample 75\% of the test set such that a fraction $x$ of it comes from the points in the test set that are classified correctly by {\LR} and {\Ada} respectively. Since $75\%$ is a large subset, we see both {\PF} and {\Greedy} doing a bit worse than the ERM methods, and also getting close to the lower bound when the subset becomes perfectly classifiable. 

\begin{figure*}[ht!]\centering
\begin{subfigure}{0.48\linewidth}\centering
\includegraphics[scale=0.40]{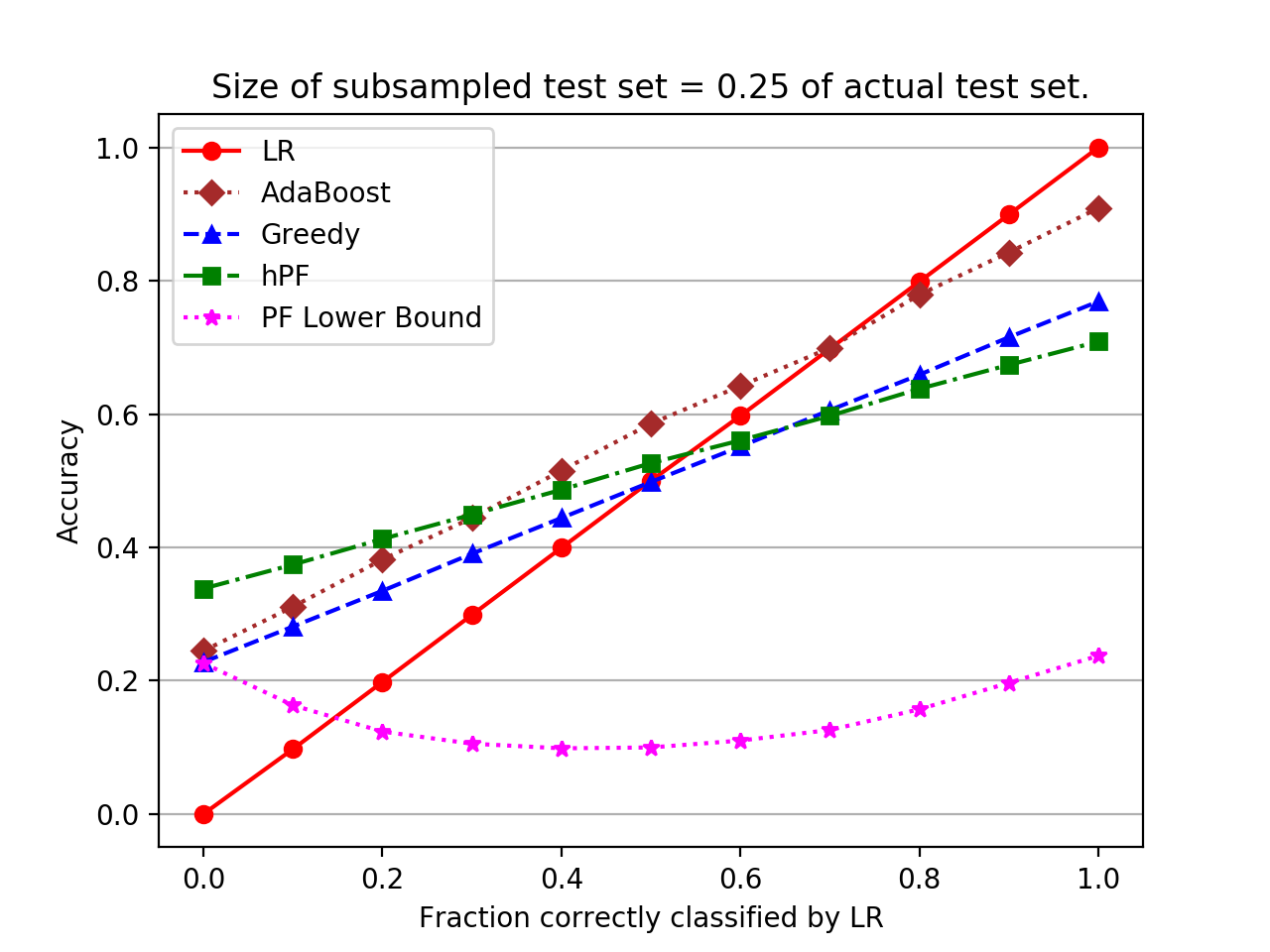}
\end{subfigure}%
\quad
\begin{subfigure}{0.48\linewidth}\centering
\includegraphics[scale=0.40]{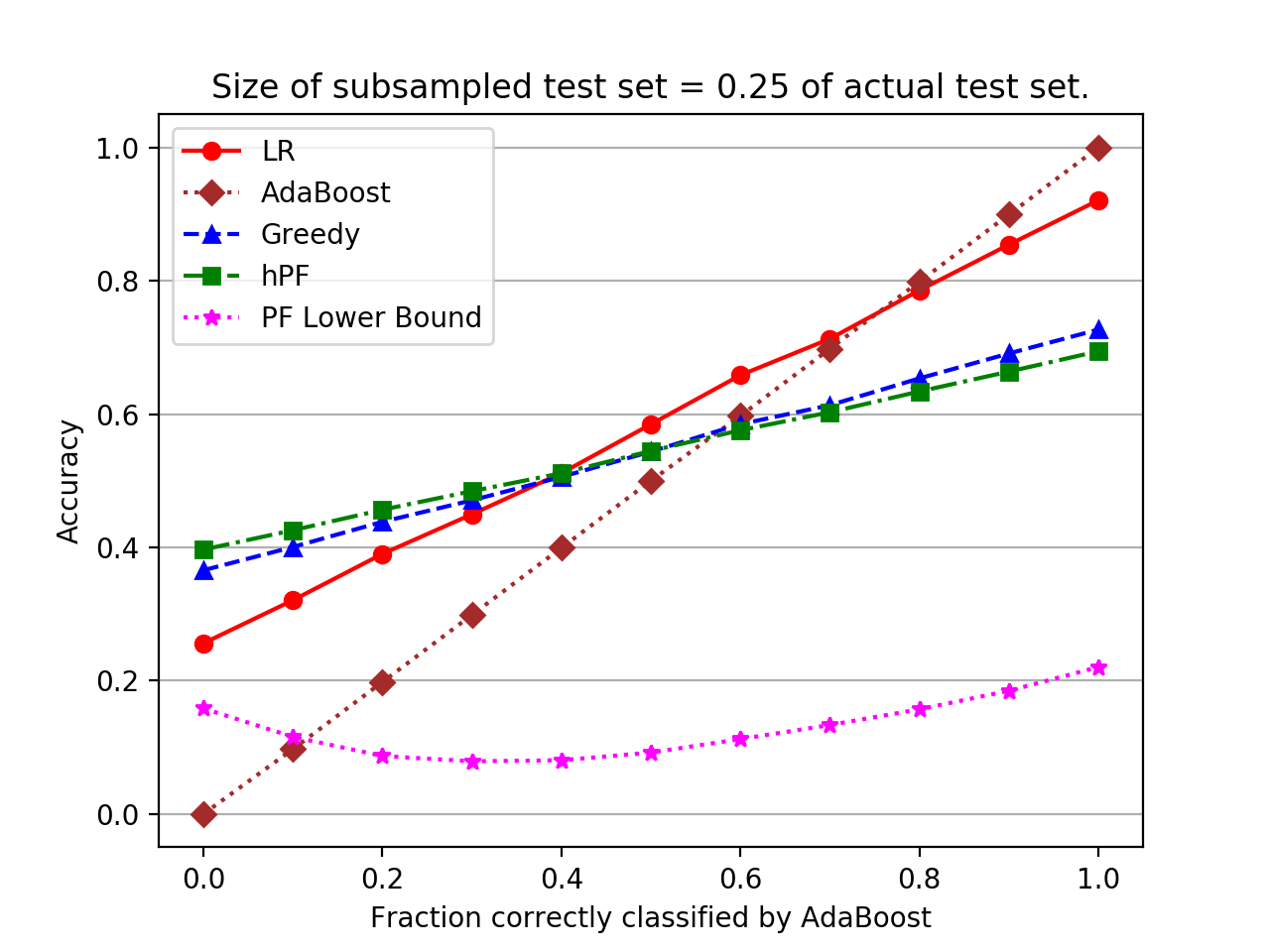}
\end{subfigure}
\caption{Accuracy on sampled subsets which are 25\% of the test set (\texttt{compas}): $x$ axis denotes the fraction of points that are classified correctly by {\LR} (left) and {\Ada} (right).}
\label{fig:compas_pf-vs-erm_25}
\end{figure*}

\begin{figure*}[ht!]\centering
\begin{subfigure}{0.48\linewidth}\centering
\includegraphics[scale=0.40]{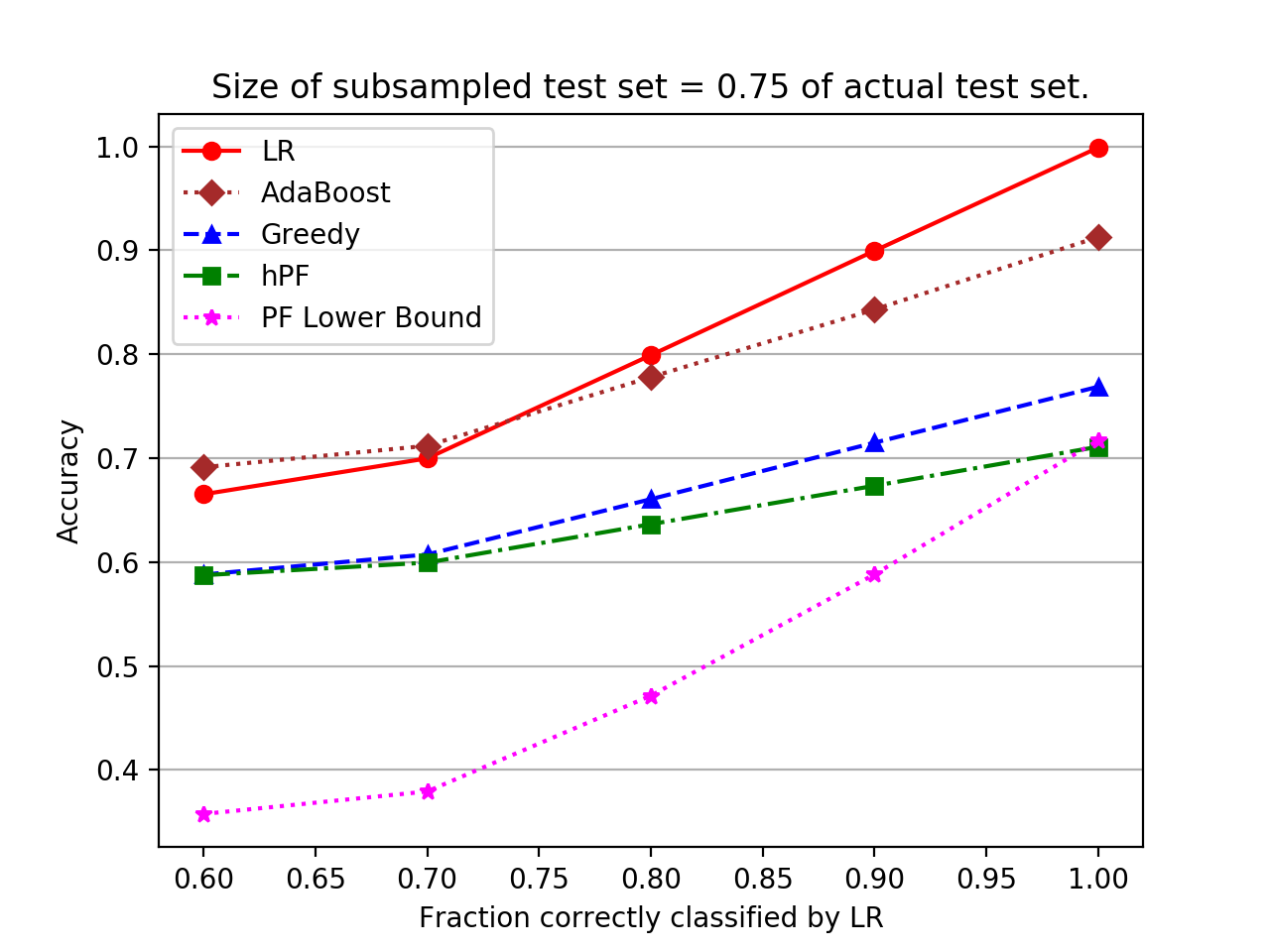}
\end{subfigure}%
\quad
\begin{subfigure}{0.48\linewidth}\centering
\includegraphics[scale=0.40]{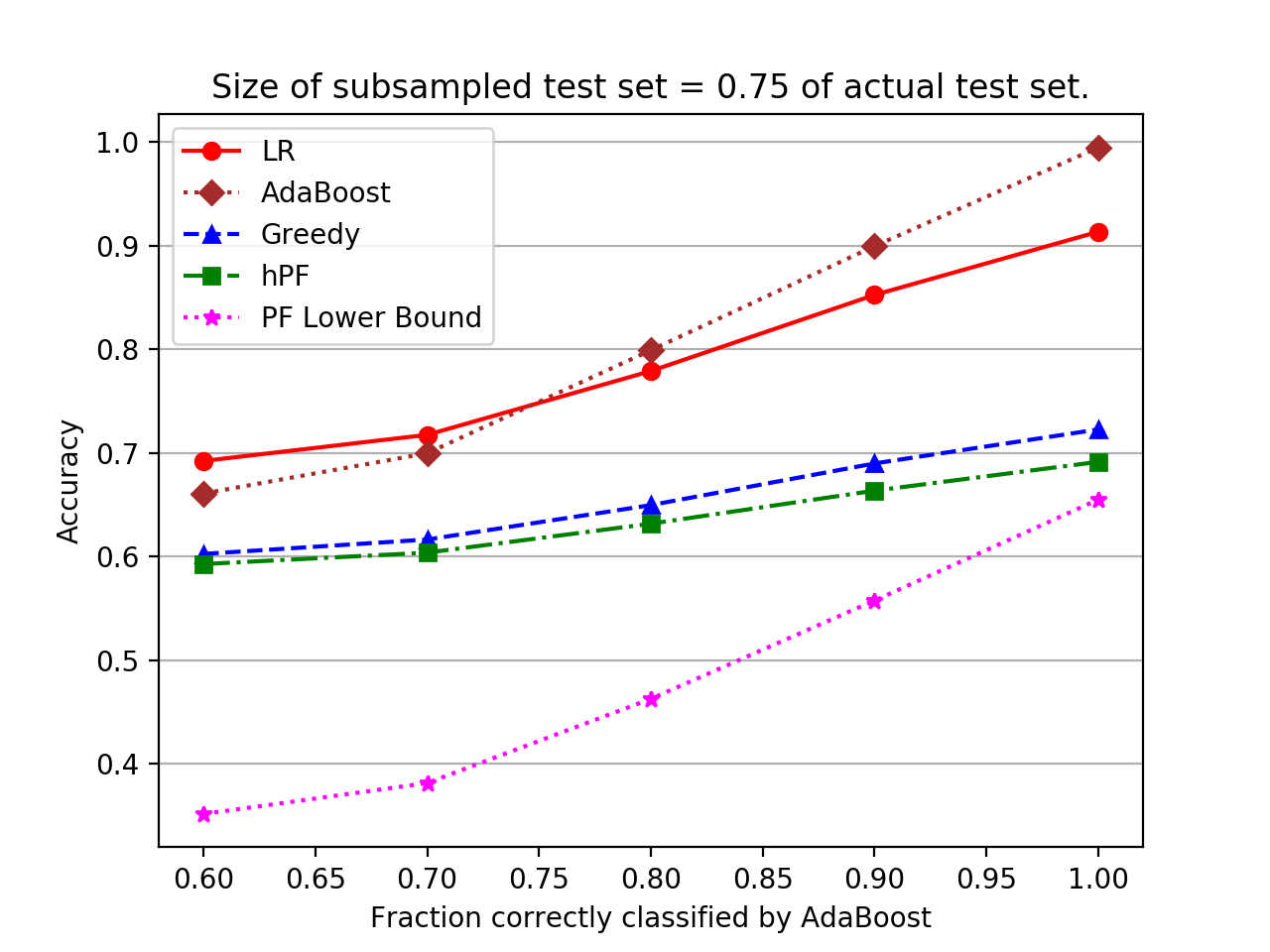}
\end{subfigure}
\caption{Accuracy on sampled subsets which are 75\% of the test set (\texttt{compas}): $x$ axis denotes the fraction of points that are classified correctly by {\LR} (left) and {\Ada} (right).}
\label{fig:compas_pf-vs-erm_75}
\end{figure*}

Similar observations can be made on the \texttt{adult} data set.

\end{document}